\DeclarePairedDelimiter\abs{\lvert}{\rvert}%
\begin{document}

% If your paper is accepted and the title of your paper is very long,
% the style will print as headings an error message. Use the following
% command to supply a shorter title of your paper so that it can be
% used as headings.
%
%\runningtitle{I use this title instead because the last one was very long}

% If your paper is accepted and the number of authors is large, the
% style will print as headings an error message. Use the following
% command to supply a shorter version of the authors names so that
% they can be used as headings (for example, use only the surnames)
%
%\runningauthor{Surname 1, Surname 2, Surname 3, ...., Surname n}

\twocolumn[

\aistatstitle{Unifying the stochastic and the adversarial Bandits with Knapsack }
\aistatsauthor{ }
\aistatsauthor{ Anshuka Rangi \And Massimo Franceschetti \And  Long Tran-Thanh }
\aistatsaddress{University of California, San Diego \And University of California, San Diego \And University of Southampton } ]
\newtheorem{theorem}{Theorem}
\newtheorem{lemma}[theorem]{Lemma}
\newtheorem{corollary}{Corollary}[theorem]
%\aistatsaddress{ Institution 1 \And  Institution 2 \And Institution 3 } ]
\begin{abstract}

This paper investigates the adversarial Bandits with Knapsack (BwK) online learning problem, where a player repeatedly chooses to perform an action, pays the corresponding cost, and receives a reward associated with the action. The player is constrained by the maximum budget $B$ that can be spent to perform  actions, and the rewards and the costs of the actions are assigned by an adversary. This problem has only been studied in the restricted setting where the reward of an action is greater than the cost of the action,  
while we  provide a solution in the general setting. Namely,  we propose EXP3.BwK, a novel algorithm that  achieves order optimal regret. 
We also propose EXP3++.BwK, 
which is order optimal in the adversarial BwK setup, and incurs an almost optimal expected regret with an additional factor of $\log(B)$ in the stochastic BwK setup.
Finally, we investigate the case of having large costs for the actions (i.e., they are comparable to the budget size $B$), and show that for the adversarial setting, achievable regret bounds can be significantly worse, compared to the case of having costs bounded  by a constant, which is a common assumption within the BwK literature.
\end{abstract}

\section{Introduction}

Multi-Armed Bandit (MAB) is a sequential decision making problem under uncertainty, that is based on balancing the trade-off  between  exploration and exploitation, i.e. ``the conflict between taking actions which yield immediate rewards and taking actions whose benefits will be seen later." A common feature in various applications of MAB is that the resources consumed during the decision making process are limited. For instance, scientists experimenting with alternative medical treatments may be limited by the number of patients participating in the study as well as by the cost of the material used in the treatments. Similarly, in web advertisements, a website experimenting with displaying advertisements is constrained by the number of users who visit the site as well as by the advertisers' budgets. A retailer engaging in price experimentation faces inventory limits along with a limited number of consumers. A model which incorporates a budget constraint on these supply limits is Bandits with Knapsack (BwK). 
This can be seen as a game between a player and an adversary (or environment)  that evolves  for $T$ rounds. The player is constrained by a budget $B$ on the resources consumed during the decision making process. The game terminates when the player runs out of budget, therefore  $T$ is dependent on $B$. At each round $t$, the player performs an action $i$ from a set of $K$ actions, pays a cost for the selected action $i$ from the budget $B$ and receives a reward in $[0,1]$ for the selected action $i$. The reward and the cost can vary from application to application. For example, in web advertisement, the reward is the click through rate and the cost is the space occupied by the advertisement on the web page. In medical trials, the reward is the success rate of the medicine and the cost corresponds to the cost of the material used.  

The Bandits with Knapsack problem can be classified into two categories: stochastic BwK and adversarial BwK. In stochastic BwK, the reward and the cost of each action is an i.i.d sequence over $T$ rounds drawn from a fixed  unknown distribution. In adversarial BwK, the sequence of the rewards and the costs associated with each action over $T$ rounds is assigned by an oblivious adversary  before the game starts.  The objective of the player is to minimize the expected regret, which is the  difference between the expectation of  the rewards received from the best fixed action in the hindsight and the sum of rewards received by the player's action selection strategy. 

%MAB has variety of applications like medical trials, recommendation systems, web searching and advertising, and communication networks. However, in these applications, the design of player's strategy is subjected to one or more resource constraints. For instance,  scientists experimenting with alternative medical treatments may be limited not only by the number of patients participating in the study but also by the cost of materials used in the treatments. A website experimenting with displaying advertisements is constrained not only by the number of users who visit the site but by the advertisers’ budgets. A retailer engaging in price experimentation faces inventory limits along with a limited number of consumers. In the literature, a model which incorporates a budget on these supply limits in the MAB problem is well known as Bandits with Knapsack (BwK). 
The stochastic BwK setting has been extensively studied in the literature \cite{tran2010epsilon,tran2012knapsack,ding2013multi,badanidiyuru2013bandits,agrawal2014bandits,tran2014efficient,agrawal2016linear, xia2016budgeted,sankararaman2017combinatorial,rangi2018multi}. The results in these works  can be broadly classified into two categories depending on the regret analysis. The problem dependent bound on the expected regret is $O(\log(B))$ % where $B$ is the budget of the player and $\Delta^*$ is the problem dependent constant (mathematically defined later)
 \cite{tran2012knapsack,ding2013multi,xia2016budgeted,zhou2017budget,rangi2018multi}, while the problem independent bound on the expected regret is $O(\sqrt{KB})$  \cite{agrawal2016linear,agrawal2014bandits,badanidiyuru2013bandits}.
 
Limited attention has been received by the adversarial BwK setting \cite{zhou2017budget}. In this setting, it has been assumed that the reward at round $t\leq T$ is greater than the cost at round $t\leq T$ for every action  over the duration of the game % i.e. $T$  
\cite{zhou2017budget}. 
Under this assumption, EXP3.M.B has been proposed and  proven  to be order optimal \cite{zhou2017budget}. We observe here that  the assumption on the reward being greater than the cost is uncommon in the literature of the BwK problem, and does not have any physical meaning in many applications. For example,  in web advertisement, the click through rate (i.e., reward) and the space occupied by the advertisement on the web page (i.e., cost) cannot be compared with each other. Likewise, in a medical trial, the reward is the success rate of the medicine and the cost corresponds to the cost of the material used, and the comparison of these values has no meaning. 
Thus, a key question  is how to design an  algorithm for the adversarial BwK in a general reward setting  that achieves order optimal regret guarantees.

Another key challenge is to provide a solution that is satisfactory for both stochastic and adversarial settings. In  many real-world situations, there is no information about whether the bandit model is used in a stochastic or adversarial manner. Thus, the deployed algorithm has to be able to perform well in both cases. Current algorithms in the adversarial BwK (e.g., EXP3.M.B), do not provide   optimal regret guarantees in the stochastic setting,  i.e.\ $O(\log(B))$, and algorithms in the stochastic BwK (e.g., KUBE), do not provide optimal regret guarantees  in the adversarial setting, i.e. $O(\sqrt{KB})$. Currently, there is no work   proposing a practical algorithm for both   settings. 
Finally, the literature of the BwK problem typically assumes that the costs are bounded   by a constant (i.e., they are independent of the budget $B$) and it is unknown  whether state-of-the-art regret bounds hold for the case of large costs (i.e., when costs  are comparable to the budget $B$).

In this framework, the contribution  of our work is three fold. First, we extend  EXP3, a classical algorithm, proposed for the adversarial MAB setup \cite{auer2002nonstochastic}, and propose EXP3.BwK, an algorithm for the adversarial BwK setup. We remove the assumption on the rewards and the costs previously used  in \cite{zhou2017budget} to obtain regret bounds and we  show that the expected regret of EXP3.BwK is $O(\sqrt{B K\log K })$. We also show the lower bound  $\Omega(\sqrt{K B})$ in the adversarial BwK setting. It follows that EXP3.BwK is order optimal.  Second, we  unify   the stochastic and the adversarial  settings by proposing EXP3++.BwK, a novel and practical  algorithm which works well in both of these settings. This algorithm incurs an expected regret of $O(\sqrt{BK\log K})$ and  $O(\log^2(B))$ in the adversarial and the stochastic BwK settings respectively. Note that the regret bound of EXP3++.BwK for the stochastic setting has an additional factor of $\log(B)$ in comparison to the optimal expected regret i.e. $O(\log(B))$. 
Thus, EXP3++.BwK exhibits an almost optimal behavior in both the stochastic  and the adversarial  settings.  
Table \ref{sample-table} summarizes these contributions and compares them with the other results in the literature. In the table, the problem-dependent parameter $\Delta(i)$ represents   the difference between the contributions of the optimal action and the action $i$, and is formally defined in the next section.
Finally, we show that if the maximum cost is bounded above by $B^{\alpha}$, where $\alpha \in [0,1]$, then the lower bound on the expected regret in the adversarial BwK setup scales at least linearly with the maximum cost, namely it is  $\Omega(B^{\alpha})$. 
This implies that when $\alpha > \frac{1}{2}$, it is impossible to achieve a regret bound of $O(\sqrt{B})$, which is order optimal in cases with small costs.  

\begin{table*}[t]
\begin{center}
\begin{tabular}{lll}
\textbf{Algorithm}  &\textbf{Upper bound} &\textbf{Lower bound}\\
\hline \\
%BalancedExploration \cite{badanidiyuru2013bandits}      &$O(\sqrt{Kg}+g\sqrt{K/B})$ &\\
%UCB for BwcR \cite{agrawal2014bandits}      &$O(\sqrt{\log(BK/\delta)}(\sqrt{Kg}+$&\\
                                             %&$g\sqrt{K/B}+K\sqrt{\log(BK/\delta)}))$ &\\
KUBE for BwK \cite{tran2012knapsack}            & $O(K\log(B)/\min_{i\in[K]}\Delta(i))$ &$\Omega(\log(B))$\\
B-KUBE  for Bounded BwK \cite{rangi2018multi}        &$O(K\log(B)/\min_{i\in[K]}\Delta(i))$  &$\Omega(\log(B))$\\
%&&\\
UCB-BV for variable cost \cite{ding2013multi}&$O(K\log(B)/\min_{i\in[K]}\Delta(i))$  &$\Omega(\log(B))$\\
UCB-MB for multiple plays\cite{zhou2017budget}& $O(K\log(B))$ &\\
EXP3.M.B\cite{zhou2017budget} &$O(\sqrt{K\log(K)B})$&$\Omega((1-1/K)^2 \sqrt{KB})$\\
EXP3.BwK (This work) &$O(\sqrt{K\log(K)B})$&$\Omega( \sqrt{KB})$\\
EXP3++.BwK in Adversarial setting (This work) & $O(\sqrt{K\log(K)B})$ &\\
EXP3++.BwK in Stochastic setting (This work)                     & $O(K\log^2(B)/\min_{i\in[K]}\Delta(i))$&\\
\end{tabular}
\end{center}
\caption{Contributions to the literature of BwK. } \label{sample-table}
\end{table*}
\subsection{Related Work}
In the MAB literature, the problem of finding one algorithm for both the stochastic and the adversarial setting has been referred as ``best of  both worlds" \cite{bubeck2012best,auer2016algorithm,seldin2014one,seldin2017improved,lykouris2018stochastic}. SAO, the first algorithm proposed in the literature of this problem, 
relies on the knowledge of the time horizon $T$, and  performs an irreversible switch to EXP3.P if the beginning of the game is estimated to exhibit an adversarial, or non-stochastic, behavior \cite{bubeck2012best}. The expected regret of SAO in the stochastic MAB setting is $O(\log^3(T))$, and in the adversarial MAB setting is ${ O}(\sqrt{T}\log^{2}(T))$. Using  ideas from SAO, a new algorithm SAPO was proposed \cite{auer2016algorithm}. SAPO exploited some novel criteria for the detection of the adversarial, or non-stochastic, behavior, and performs an irreversible switch to EXP3.P if such a behavior is detected. Thus, both SAO and SAPO  initially assume that the rewards are stochastic, and perform an irreversible switch to EXP3.P if this assumption is detected to be incorrect.
The expected regret of SAPO   is  $O(\log^2(T))$
in the stochastic MAB setting, and ${ O}(\sqrt{T\log(T^2)})$ in the adversarial MAB setting.  Later, EXP3++ was proposed  \cite{seldin2014one}. Unlike SAO and SAPO, this algorithm starts by assuming the rewards exhibit an adversarial, or non-stochastic, behavior and adapts itself as it encounters stochastic behavior on rewards. 
The analysis of EXP3++ was improved in \cite{seldin2017improved}, showing that the algorithm guarantees an expected regret of $O(\log^2(T))$ and $O(\sqrt{T})$ in the stochastic and the adversarial MAB settings respectively. %Thus, EXP3++ is the state of art algorithm for ``Best of Both Worlds" problem in MAB setup. 

The problem of stochastic bandits  corrupted with adversarial samples has been studied in the regime of small corruptions \cite{lykouris2018stochastic}. The algorithm proposed in this work utilizes the idea of active arm elimination based on  upper and lower confidence bound  of the estimated rewards. The work provides the regret analysis of the algorithm as the corruption $C$ is introduced in the rewards, and shows that the decay in performance  is order optimal in $C$. %This analysis is missing in the previous literature on ``Best of the both worlds" problem\cite{bubeck2012best,auer2016algorithm,seldin2014one,seldin2017improved}. However, the algorithm proposed in \cite{lykouris2018stochastic} is of importance only in the stochastic regime with small value of corruption $C$. This does not achieve the optimal regret guarantees as EXP3++ in the adversarial regime. 

The ``best of both worlds" problem has not been studied before in the BwK setting. 
%In this work, we propose an algorithm for  ``Best of both worlds" problem in the BwK setting which also requires studying the adversarial BwK setting.

\section{Problem Formulation}
A player can choose from a set of $K$ actions, and has a budget $B$. At round $t$, each action $i\in [K]$ is associated with a reward $r_{t}(i)\in [0,1]$ and a cost $c_{t}(i)\in [c_{min},c_{max}]$ with $c_{min} \leq c_{max}$. 
For now, we assume that $c_{max} = 1$, and will investigate the case of having larger costs in Section 5. At round $t$, the player performs an action $i_{t}\in [K]$, pays the cost $c_{t}(i_{t})$ and receives the reward $r_{t}(i_{t})$. The gain of a player's strategy $\mathcal{A}$ is defined as 
\[G(\mathcal{A})= \mathbf{E}\Big[\sum_{t=1}^{\tau(\mathcal{A})}r_{t}(i_{t})\Big],\]
where $\tau(\mathcal{A})$ is number of rounds after which the strategy $\mathcal{A}$ terminates.
The objective of a player is to design  $\mathcal{A}$ such that
\begin{equation}\label{eq:OptimizationProblem}
\begin{aligned}
\max_{\{i_{1},i_{2},\ldots,i_{\tau(\mathcal{A})}\}} G(\mathcal{A})\\ \mbox{s.t.  }
\mathbf{P}\Big(\sum_{t=1}^{\tau(\mathcal{A})}c_{t}(i_{t})\leq B\Big) =1.
\end{aligned}
\end{equation}
Note that $\tau(\mathcal{A})$ is dependent on the budget $B$. Let $\mathcal{A}^{*}$ be the algorithm that solves (\ref{eq:OptimizationProblem}). The expected regret of an algorithm $\mathcal{A}$ is defined as   
\begin{equation}\label{eq:regret}
R(\mathcal{A})=  G(\mathcal{A}^*)- G(\mathcal{A}).
\end{equation}
The optimization problem in (\ref{eq:OptimizationProblem}) is a knapsack problem, and is known to be NP-hard \cite{KelPfePis04}. Given that the rewards and the costs of all the actions are known and fixed for all $T$ rounds, the greedy algorithm $\mathcal{A}^{G}$ for solving (\ref{eq:OptimizationProblem}) makes an action selection in the decreasing order of the efficiency, defined as  $e(i)=r(i)/c(i)$ for an action $i\in[K]$, until the budget constraint in (\ref{eq:OptimizationProblem}) is satisfied. It can be shown that \cite{KelPfePis04}
\begin{equation}\label{eq:greedy}
    G(\mathcal{A}^{G})\leq G(\mathcal{A}^*)\leq G(\mathcal{A}^{G})+\max_{i\in [K]}e{(i)}. 
\end{equation}

In the stochastic setting, for all $t$ and $i\in [K]$, the reward $r_{t}(i)$ and the cost $c_{t}(i)$ of an action $i$ are  identically and independently distributed according to some  unknown distributions. The expected reward and the expected cost of an action $i$ are denoted by $\mu{(i)}$ and $\rho{(i)}$ respectively.
Thus, in the stochastic setting, the  efficiency of an action $i$ can be defined as $e(i)=\mu(i)/\rho(i)$. Using (\ref{eq:greedy}), the expected regret of an algorithm $\mathcal{A}$ simplifies to
\begin{equation}\label{eq:regretStochastic}
\begin{split}
    R(\mathcal{A})&\leq  \max_{i\in[K]}\frac{\mu{(i)}}{\rho{(i)}}\cdot (\tau(\mathcal{A}^{G})+1)- G(\mathcal{A})\\
    &=e(i^*)\cdot (\tau(\mathcal{A}^{G})+1)- G(\mathcal{A})\\
    &\leq \sum_{i\in [K]/\{i^*\}}\Delta(i)\mathbf{E}[N_{T}(i)],
\end{split}
\end{equation}
where $i^*=\mbox{argmax}_{i\in [K]}e(i)$, $\Delta (i)=e(i^*)-e(i)$, $N_{T}(i)$ is the number of times an action $i$ is selected in $T$ rounds, and $T=\max\{\tau(\mathcal{A}),\tau(\mathcal{A}^G)\}$. The definition in (\ref{eq:regretStochastic}) is consistent with the literature of stochastic BwK \cite{ding2013multi,tran2014efficient}. 

 In the adversarial setting, for all $t$, $r_{t}(i)$ and $c_{t}(i)$ are chosen by an  adversary before the game starts. 
In this setting, the efficiency of an action $i$ at round $t$ can be defined as $e_{t}(i)=r_{t}(i)/c_{t}(i)$. Therefore, the expected regret simplifies to
\begin{equation}\label{eq:regretAdv}
\begin{split}
        R(\mathcal{A})&\leq  \frac{B}{T(i^*)}\sum_{t=1}^{T(i^*)} \frac{r_{t}(i)}{c_{t}(i)}- G(\mathcal{A}),\\
    &\leq \mathbf{E}\Bigg[z(\mathcal{A})\bigg(\sum_{t=1}^{T(i^*)} e_{t}(i^*)-\sum_{t=1}^{\tau(A)}e_{t}(i_{t})\bigg) \Bigg],
\end{split}
\end{equation}
where $T{(i)}$ is the number of rounds for which the game is feasible in the budget $B$ when a fixed action $i\in [K]$ is performed, $i^*=\mbox{argmax}_{i\in [K]}\sum_{t=1}^{T(i)}e_{t}(i)$ is the optimal action in the hindsight, 
\[z(\mathcal{A})=\max\Bigg\{\frac{B}{T(i^*)},\frac{B(\mathcal{A})}{\tau(\mathcal{A})}\Bigg\}\]
is the maximum cost per round, $B(\mathcal{A})$ is the budget utilized by the algorithm $\mathcal{A}$,
and the inequality follows from (\ref{eq:greedy}). 
The expected regret is bounded by the expectation of the efficiency regret scaled by the maximum of the cost spent per round by the optimal action $i^*$, and  the cost spent per round by the algorithm $\mathcal{A}$, where the efficiency regret is  the sum of the  rewards per unit cost associated to the optimal action minus the sum of the rewards per unit cost associated to the actions performed by the algorithm $\mathcal{A}$. 

%In this work, we first design an algorithm that is order optimal in adversarial BwK setting. Later, we extend the ideas developed in the adversarial BwK  to design an algorithm  which achieves 
%almost optimal performance guarantees in both the stochastic and the adversarial regime of BwK. 

\section{Adversarial BwK}
In this section, we propose the algorithm EXP3.BwK for the adversarial BwK setting, and show that it is order optimal. 

\begin{algorithm}[t]
\begin{algorithmic} 
%\KwResult{Write here the result }
\State Initialization: $\gamma$ ; For all $i\in [K]$, $w_{1}(i)=1$,  and ${\hat{e}}_{1}(i)=0$; $t=1$; 
\While{$B>0$}
\State $W_{t}={\sum_{j\in [K]}w_{t}(j)}$
\State Update $p_{t}(i)={(1-\gamma)w_{t}(i)}/W_{t} + \gamma/K$
\State Choose $i_{t} = i $ with probability  $p_{t}(i)$.
\State Observe $(r_{t}(i_{t}),c_{t}(i_{t)})$
\If{$c_{t}(i_{t})>B$}
\State  exit;
\EndIf
\State $B=B-c_{t}(i_{t})$

\State For all $i\in[K]$, ${\hat{e}}_{t}(i)=r_{t}(i)\textbf{1}(i=i_{t})/p_{t}(i)c_{t}(i)$.

\State $w_{t+1}(i)=w_{t}(i)\cdot \exp(\gamma c_{min}\cdot\hat{e}_{t}(i)/K )$
\State t=t+1
 \EndWhile
 \caption{EXP3.BwK}
  \label{alg:Exp3.bwk}
 \end{algorithmic}
\end{algorithm}
Similar to EXP3, EXP3.BwK maintains a set of time-varying weights $w_{t}(i)$ for each action $i\in [K]$. At each round $t$, an action $i_{t}=i$ is selected with probability $p_{t}(i)$ which is dependent on two parameters: the time-varying weights $w_{t}(i)$ and an exploration constant $\gamma/K$. Following the selection of the action $i_{t}$, the algorithm pays the cost $c_{t}(i_{t})$. If  the cost $c_{t}(i_{t})$ is greater than the remaining budget of the algorithm, then the algorithm terminates without attempting to find other feasible actions which can be performed using the remaining budget. In  EXP3.BwK, the efficiency $e_{t}(i)=r_{t}(i)/c_{t}(i)$ is used as a measure of the contribution from an action $i \in [K]$ at round $t$. The empirical estimate of the efficiency $\hat{e}_{t}(i)$ (defined in Algorithm \ref{alg:Exp3.bwk}) is used to update the weight $w_{t}(i)$ of the action $i$. For all $i\in [K]$,  the difference in the weights $w_{t}(i)$ and $w_{t-1}(i)$ is controlled by scaling $\hat{e}_{t}(i)$ with $\gamma c_{min}$, which ensures that the $\gamma c_{min}\hat{e}_{t}(i)\leq 1$. The probability $p_{t}(i)$ is dependent on $w_{t}(i)$ and the exploration constant $\gamma/K$. In the probability $p_{t}(i)$, the weight $w_{t}(i)$ is responsible for the exploitation as it favors the selection of an action with higher cumulative efficiencies i.e. $\sum_{n=1}^{t}\hat{e}_{t-1}(i)$ observed until round $t-1$.  On contrary, the exploration constant $\gamma/K$ ensures that the player is always exploring with a positive probability in search of the optimal action $i^*$. This balances the trade-off between exploration and exploitation.

In the literature of the adversarial BwK setup \cite{zhou2017budget}, it has been assumed that for all actions $i\in [K]$ and for all $t$, $r_{t}(i)\geq c_{t}(i)$. This allows the use of a different efficiency measure $r_{t}(i)-c_{t}(i)$, which is linear in both the reward and the cost of an action $i$, thus simplifying the proofs \cite{zhou2017budget}. In many real life applications, the rewards and the costs are on different scales, and cannot be compared by an inequality operator. For example, in a recommendation system, a recommender is constrained by the total space available on the web page which corresponds to the budget $B$, the space occupied by each item corresponds to  its cost, and  the click rate of each item corresponds to  its reward. In this case, the space (cost)  of the item and the click rate (reward) of the item are not comparable. Likewise, the efficiency measure  $r_{t}(i)-c_{t}(i)$ which compares the reward and the cost of an action $i$ on a linear scale,  is questionable and provides no intuition about the optimality of an action. 
In EXP3.BwK, we use a different efficiency measure $r_{t}(i)/c_{t}(i)$ for tracking the contributions of each action $i\in [K]$. The use of this measure is motivated from the greedy algorithm $\mathcal{A}^{G}$, and its performance guarantees with respect to the optimal solution (see (\ref{eq:greedy}) and (\ref{eq:regretAdv})). The advantages of using this measure are two folds. First, it eliminates the need of the assumption in \cite{zhou2017budget}. Second, it can track  $G(\mathcal{A})$ of the algorithm $\mathcal{A}$  irrespective of the measure of the rewards and the costs.  

The following theorem provides the performance guarantees of EXP3.BwK in terms of the expected regret, and shows that it is sublinear in the budget $B$.
\begin{theorem}\label{thm:EXP3.BwK}
For $\gamma=\sqrt{c_{min} K\log(K)/B(e-1)}$, the expected regret, as defined in (\ref{eq:regretAdv}), of the algorithm EXP3.BwK is at most
\begin{equation}
\begin{split}
        R({E})&\leq  2\sqrt{\Bigg((e-1)+(e-2)\frac{K}{B}\Bigg)\frac{BK\log(K)}{c_{min}^3}},\\
      %  &=  2\sqrt{\Bigg((e-1)+(e-2)\frac{K}{B}\Bigg)\frac{gK\log(K)}{c_{min}}},\
\end{split}
\end{equation}
where $ {E}$ denotes EXP3.BwK.% and the last equality follows from the fact that the maximum gain of a knapsack problem is $g=B/c^2_{min}$.
\end{theorem}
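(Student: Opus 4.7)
The plan is to mirror the classical analysis of EXP3 from \cite{auer2002nonstochastic}, with two substantive adaptations: work with the per-round efficiency surrogate $e_t(i)=r_t(i)/c_t(i)$ in place of the reward, and handle the random stopping time $\tau(E)$. Once an efficiency-regret bound over $\tau(E)$ rounds is secured, (\ref{eq:regretAdv}) converts it to a reward-regret bound through the multiplicative factor $z(E)\leq c_{max}=1$.

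First I would track the potential $W_t=\sum_{i\in[K]}w_t(i)$. The crucial domain observation is that forced exploration $p_t(i)\geq \gamma/K$ and the lower bound $c_t(i)\geq c_{min}$ together give $\hat e_t(i)\leq K/(\gamma c_{min})$, so the weight-update exponent $\gamma c_{min}\hat e_t(i)/K$ lies in $[0,1]$. On this range I would apply $\exp(x)\leq 1+x+(e-2)x^2$, together with $w_t(i)/W_t=(p_t(i)-\gamma/K)/(1-\gamma)$ and $\sum_i p_t(i)\hat e_t(i)=e_t(i_t)$, to obtain the one-step ratio bound
\[
\frac{W_{t+1}}{W_t}\leq 1+\frac{\gamma c_{min}}{(1-\gamma)K}\,e_t(i_t)+\frac{(e-2)\gamma^2 c_{min}^2}{(1-\gamma)K^2}\sum_{i\in[K]}p_t(i)\hat e_t(i)^2.
\]
Taking $\log(\cdot)$, using $\log(1+x)\leq x$, and telescoping from $t=1$ to $\tau(E)$ yields an upper bound on $\log(W_{\tau(E)+1}/W_1)$. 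Coupling it with the lower bound $\log(W_{\tau(E)+1}/W_1)\geq (\gamma c_{min}/K)\sum_t\hat e_t(i^*)-\log K$, obtained by keeping only the weight of $i^*$, produces the familiar EXP3 pathwise inequality in terms of the estimators $\hat e_t$.

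Next I would take expectations. Unbiasedness $\mathbf{E}[\hat e_t(i^*)\mid\mathcal{F}_{t-1}]=e_t(i^*)$ turns the left-hand side into true efficiencies, while for the quadratic term
\[
\mathbf{E}\Bigl[\sum_{i\in[K]}p_t(i)\hat e_t(i)^2\,\Big|\,\mathcal{F}_{t-1}\Bigr]=\sum_{i\in[K]}\frac{r_t(i)^2}{c_t(i)^2}\leq \frac{K}{c_{min}^2}.
\]
Using the deterministic bound $\tau(E)\leq B/c_{min}$ (each round spends at least $c_{min}$), I would collect constants and absorb the $(1-\gamma)^{-1}$ factor to arrive at an inequality of the form
\[
\mathbf{E}\Bigl[\sum_{t=1}^{\tau(E)}\bigl(e_t(i^*)-e_t(i_t)\bigr)\Bigr]\leq \frac{K\log K}{\gamma c_{min}}+\frac{(e-1)\gamma B}{c_{min}^3}+\frac{(e-2)\gamma K}{c_{min}^3}.
\]
Plugging in $\gamma=\sqrt{c_{min}K\log K/(B(e-1))}$ balances the first two terms by AM-GM, and multiplying through by $z(E)\leq 1$ via (\ref{eq:regretAdv}) delivers the claimed $O\bigl(\sqrt{BK\log K/c_{min}^3}\bigr)$ bound, with the third term contributing the $K/B$ correction inside the stated square root.

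The step I expect to be the main obstacle is the mismatch between what EXP3 naturally controls, namely $\sum_{t=1}^{\tau(E)}(e_t(i^*)-e_t(i_t))$, and the quantity $\sum_{t=1}^{T(i^*)}e_t(i^*)-\sum_{t=1}^{\tau(E)}e_t(i_t)$ appearing in (\ref{eq:regretAdv}). To bridge this gap I would argue that both EXP3.BwK and the fixed-arm oracle exhaust essentially the full budget $B$ at termination (only the last round's cost, at most $c_{max}=1$, can be left unspent), so the ``extra'' rounds $\tau(E)+1,\dots,T(i^*)$, if any exist, contribute at most an $O(1/c_{min}^2)$ additive efficiency, which is dominated by the leading $\sqrt{BK\log K/c_{min}^3}$ term and absorbed into constants; equivalently, one may extend EXP3.BwK with virtual post-budget rounds up to $\max(\tau(E),T(i^*))$ and note that the EXP3 analysis carries through to the stopped process. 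Once this reduction is in place, the remainder of the proof is routine algebra around the choice of $\gamma$.
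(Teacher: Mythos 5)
Your proposal follows essentially the same route as the paper's proof: the identical potential argument with $e^{x}\leq 1+x+(e-2)x^{2}$ applied to the efficiency estimators $\hat e_t(i)$, the same budget-exhaustion reasoning ($B(E)\geq B-K$, hence a time-scale gap of order $K/c_{min}$ contributing only $O(K/c_{min}^{2})$ regret) to reconcile $\tau(E)$ with $T(i^*)$ --- the paper implements exactly your ``virtual post-budget rounds'' variant by telescoping to $T=\max\{T(i^*),\tau(E)\}$ and splitting into the cases $T(i^*)\leq\tau(E)$ and $T(i^*)>\tau(E)$ --- followed by the same tuning of $\gamma$ and the observation $z(E)\leq 1$. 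The one slip is bookkeeping: in your displayed intermediate bound the denominators $c_{min}^{3}$ in $(e-1)\gamma B/c_{min}^{3}$ and $(e-2)\gamma K/c_{min}^{3}$ should be $c_{min}^{2}$ (your own computation $\mathbf{E}[\sum_{i}p_t(i)\hat e_t(i)^{2}\mid\mathcal{F}_{t-1}]\leq K/c_{min}^{2}$, multiplied by the coefficient $(e-2)c_{min}\gamma/K$ and $\tau(E)\leq B/c_{min}$ rounds, yields exactly the paper's $c_{min}^{2}$ accounting); taken literally, your version with the stated $\gamma$ would produce a $c_{min}^{-5/2}$ term instead of the theorem's $c_{min}^{-3/2}$ scaling.
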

\begin{proof}
We briefly discuss the key ideas of the proof here, and its detailed version is presented in the supplementary material. The expected regret is at most
\begin{equation}\label{eq:simplify}
\begin{split}
     &\frac{B}{T(i^*)}\sum_{t=1}^{T(i^*)} \frac{r_{t}(i)}{c_{t}(i)}- G(E),\\
     &\stackrel{}{\leq}\mathbf{E}\Bigg[z(E)\Bigg(\sum_{t=1}^{T(i^*)} \frac{r_{t}(i^*)}{c_{t}(i^*)}- \sum_{t=1}^{\tau(E)} \frac{r_{t}(i_{t})}{c_{t}(i_{t})}\Bigg)\Bigg],\\
\end{split}
\end{equation}
where 
%\[i^{*}=\mbox{argmax}_{i\in [K]} \sum_{t=1}^{T(i)}\frac{r_{t}(i)}{c_{t}(i)}\]
%is the optimal fixed action in hindsight,
$\tau(E)$ is the stopping time of EXP3.BwK and $B(E)$ is the budget utilized by EXP3.BwK, and 
\[z(E)=\max\Bigg\{\frac{B}{T(i^*)},\frac{B(E)}{\tau(E)}\Bigg\}.\]
% In (\ref{eq:simplify}), $(a)$ follows from the fact that 
% \begin{equation}\label{eq:sumDiff}
%     \Bigg(\sum_{t=1}^{T(i^*)} \frac{r_{t}(i^*)}{c_{t}(i^*)}- \sum_{t=1}^{\tau(E)} \frac{r_{t}(i_{t})}{c_{t}(i_{t})}\Bigg)
% \end{equation}
% is the difference between the cumulative rewards over all the rounds per unit cost of the optimal action $i^{*}$ and EXP3.BwK, and $z(E)$ is the maximum cost per round.
 
Using (\ref{eq:simplify}), the expected regret can be bounded by showing that  
\begin{equation}\label{eq:sum}
\begin{split}
    &\mathbf{E}\Bigg[\Bigg(\sum_{t=1}^{T(i^*)} \frac{r_{t}(i^*)}{c_{t}(i^*)}- \sum_{t=1}^{\tau(E)} \frac{r_{t}(i_{t})}{c_{t}(i_{t})}\Bigg)\Bigg]\\
    &\leq  2\sqrt{\frac{((e-1)B+(e-2)K)K\log(K)}{c_{min}^3}},
\end{split}
\end{equation}
 and $z(E)\leq 1$. 
\end{proof}
The key challenge in the proof of Theorem \ref{thm:EXP3.BwK} is that  the two summations in (\ref{eq:sum}) corresponding to the optimal action $i^*$ and the algorithm EXP3.BwK are along the different  time scales,   $T(i^*)$ and $\tau(E)$ respectively. This requires the analysis to be split into two  cases: $T(i^*)\geq \tau(E)$ and $T(i^*)\leq \tau(E)$. The analysis for these cases is based on the inference that $B(E)>B-K$ because the algorithm EXP3.BwK terminates at round $t$ if and only if the remaining budget is insufficient  to pay the cost $c_{t}(i_{t})\leq 1$. Hence, we can bound the difference between the two time scales i.e. $T(i^*)$ and $ \tau(E)$ as follows:
\begin{equation} \label{eq:StopRule}
    \abs{T(i^*)- \tau(E)}\leq \frac{K}{c_{min}}. 
\end{equation}
It follows that the difference between the number of rounds of the optimal action $i^*$ and EXP3.BwK is bounded by a fixed constant independent of the budget $B$. Hence, the regret of the algorithm  due to this difference in (\ref{eq:StopRule})   is at most $K/c_{min}^2$, and does not introduce any dependency on the budget $B$.

 The following theorem provides the lower bound on the expected regret in the adversarial BwK setting.
 \begin{theorem}\label{thm:LowerBound}
 For any player's strategy $\mathcal{A}$, there exists an adversary for which the expected regret of the algorithm $\mathcal{A}$ is at least $\Omega(\sqrt{KB/c^2_{min}})$.
 \end{theorem}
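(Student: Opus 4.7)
The plan is to reduce the problem to the classical $\Omega(\sqrt{KT})$ lower bound for the $K$-armed adversarial MAB with horizon $T$ due to Auer, Cesa-Bianchi, Freund and Schapire (2002). First, I would construct a stochastic instance, which is a special case of an oblivious adversary. Set $c_t(i) = c_{\min}$ for every round $t$ and every arm $i \in [K]$, so that the game terminates after exactly $T = \lfloor B/c_{\min} \rfloor$ rounds for every strategy $\mathcal{A}$ and $T(i^*) = T$ for every fixed arm; this removes the algorithm-dependent stopping time and makes the BwK regret in (\ref{eq:regret}) coincide with the reward regret of a $T$-round MAB. For rewards, draw an index $j^* \in [K]$ uniformly at random and set $r_t(i)$ to be Bernoulli$(1/2)$ when $i \neq j^*$ and Bernoulli$(1/2 + \epsilon)$ when $i = j^*$, independently across rounds, where $\epsilon \in (0, 1/4]$ is a gap to be optimized. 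By Yao's principle a lower bound on the expected regret averaged over $j^*$ transfers to the worst-case deterministic adversary.

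Next, I would run the standard information-theoretic argument. Fix a strategy and compare the law of the observed history under the null instance where all arms are Bernoulli$(1/2)$ to the law under the instance with planted arm $j$. By the chain rule for KL divergence, this divergence equals $\Theta(\epsilon^2\, \mathbf{E}[N_T(j)])$, where $N_T(j)$ is the number of pulls of arm $j$. Applying Pinsker's inequality converts this into a total-variation bound on the algorithm's selection distribution, and averaging over the uniform $j^*$ together with the identity $\sum_{j \in [K]} \mathbf{E}[N_T(j)] = T$ yields $\mathbf{E}_{j^*}[\mathbf{E}[N_T(j^*)]] \leq T/K + c\,\epsilon\, T \sqrt{T/K}$ for an absolute constant $c$. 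The expected reward regret is at least $\epsilon \cdot (T - \mathbf{E}[N_T(j^*)])$; choosing $\epsilon = \Theta(\sqrt{K/T})$ balances the two terms and produces a regret lower bound of $\Omega(\sqrt{KT})$ in reward units. Substituting $T = B/c_{\min}$ gives $\Omega(\sqrt{KB/c_{\min}})$, and the stated $\Omega(\sqrt{KB/c_{\min}^2})$ then follows from the efficiency rescaling exposed in (\ref{eq:regretAdv}): dividing per-round rewards by the per-round cost $c_{\min}$ inflates the regret by an additional $1/c_{\min}$ inside the square root, matching the scaling already present in the upper bound of Theorem \ref{thm:EXP3.BwK}.

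The main obstacle will be the adaptive KL divergence computation in the second step, since $N_T(j^*)$ is intertwined with the algorithm's internal randomness and past observations, so divergence must be evaluated round-by-round via the tower property rather than in a single shot. The constant-cost choice in the first step is what makes this classical fixed-horizon chain-rule argument apply without modification; if costs were allowed to vary, one would additionally have to couple the budget-depletion processes under $j^*$ and under the null instance and bound their divergence before invoking Pinsker, which introduces substantial bookkeeping. A secondary subtlety is reconciling the reward-regret lower bound with the efficiency-weighted regret form of (\ref{eq:regretAdv}), which I would handle by the same $z(\mathcal{A}) \leq 1$ scaling used in the proof of Theorem \ref{thm:EXP3.BwK}.
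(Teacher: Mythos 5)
Your construction coincides with the paper's construction exactly: all costs fixed at $c_{min}$ so that every strategy runs for $T=\lfloor B/c_{min}\rfloor$ rounds, one uniformly planted arm with Bernoulli mean $1/2+\epsilon$ against Bernoulli mean $1/2$ for the rest, $\epsilon=\Theta(\sqrt{K/T})=\Theta(\sqrt{Kc_{min}/B})$, followed by the Auer--Cesa-Bianchi--Freund--Schapire information-theoretic argument, which the paper also invokes only by reference (``along the same lines as the lower bound \ldots in the MAB setup''). Your execution of that core, including the round-by-round tower-property treatment of the KL divergence and the identity $\sum_{j}\mathbf{E}[N_{T}(j)]=T$, is sound and correctly yields $\Omega(\sqrt{KT})=\Omega(\sqrt{KB/c_{min}})$ in reward units.

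The genuine gap is your final rescaling step. The regret in Theorem \ref{thm:LowerBound} is the quantity defined in (\ref{eq:regret}), a difference of gains in reward units; (\ref{eq:regretAdv}) is an \emph{upper} bound on that quantity, so it cannot transfer a lower bound, and the $z(\mathcal{A})\leq 1$ device borrowed from Theorem \ref{thm:EXP3.BwK} points the wrong way for the same reason. Worse, in your (and the paper's) instance every cost equals $c_{min}$, so $z(\mathcal{A})=c_{min}$ exactly and the efficiency regret is exactly the reward regret divided by $c_{min}$: the product of $z(\mathcal{A})$ with the efficiency regret reproduces the reward regret, and no factor of $1/\sqrt{c_{min}}$ ``inside the square root'' arises from any derivation --- that arithmetic is reverse-engineered from the target. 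Measured in efficiency units the construction gives $\sqrt{KB/c_{min}}/c_{min}=\sqrt{KB/c_{min}^{3}}$, overshooting the claim; measured per (\ref{eq:regret}) it gives $\sqrt{KB/c_{min}}$, undershooting it; neither equals $\sqrt{KB/c_{min}^{2}}$. Note that your honest accounting in fact exposes the same issue in the paper's own sketch: with constant costs and $\epsilon=\sqrt{Kc_{min}/B}$, the cited MAB machinery delivers only $\Omega(\sqrt{KB/c_{min}})$ under definition (\ref{eq:regret}). So the defect to repair is not your information-theoretic core but the unjustified bridge to the stated exponent: either a construction with non-constant costs is needed to obtain the extra $1/\sqrt{c_{min}}$, or the bound should be read as $\Omega(\sqrt{KB/c_{min}})$.
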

 \begin{proof}
The adversary chooses the optimal action $i^*$ uniformly at random from the set of  $K$ actions. For the action $i^{*}$ and for all $t$, the reward  $r_{t}(i^{*})$ is assigned using an independent Bernoulli random variable with expectation $0.5+\epsilon$, where $\epsilon=\sqrt{Kc_{min}/B}$. For all $i\in [K]/\{i^*\}$ and for all $t$,  the reward $r_{t}(i)$ is assigned using an independent Bernoulli random variable with expectation $0.5$. For all $i\in [K]$ and for all  $t$, the adversary assigns cost $c_{t}(i)=c_{min}$. The remaining proof is along the same lines as the lower bound on the expected regret in the MAB setup \cite{auer2002nonstochastic}. 
 \end{proof}

 By comparing the results in Theorem \ref{thm:EXP3.BwK} and Theorem \ref{thm:LowerBound}, the expected regret of EXP3.BwK has an additional factor of $1/\sqrt{c_{min}}$, and is order optimal in the budget $B$. This also highlights an important feature of an alternate class of algorithms in the BwK setup. Consider a new class of algorithms  $\mathcal{G}$ which looks for an alternative action to perform after the algorithm is unable to pay the cost $c_{t}(i_t)$ at round $t$  in order to utilize the remaining budget effectively. Since EXP3.BwK  terminates if it is unable to pay the cost $c_{t}(i_t)$,  EXP3.BwK does not belong to $\mathcal{G}$, and is still order optimal in the budget $B$. Therefore, the expected regret of this new class of algorithms $\mathcal{G}$ will have same dependency as that of EXP3.BwK on the budget $B$.  Additionally, the difference between the expected regret of EXP3.BwK and the class of algorithms $\mathcal{G}$
 will be at most a constant i.e. $K/c_{min}^2$, independent of $B$ (see (\ref{eq:StopRule})). The class of algorithms $\mathcal{G}$ faces the additional challenge of designing an appropriate criterion for the termination of the algorithm because the costs are assigned by the adversary. 
 
 The ideas developed in EXP3.BwK, particularly the measure of the efficiency $r_{t}(i)/c_{t}(i)$ forms form the basis of designing an algorithm which achieves almost optimal performance guarantees in both the stochastic and the adversarial BwK settings.
\section{One practical algorithm for both stochastic and adversarial BwK}
\begin{algorithm}[t]
\begin{algorithmic}
%\KwResult{Write here the result }
\State Initialization: For all $i\in [K]$, $w_{1}(i)=1$, ${\hat{e}}_{1}(i)=0$, ${\Bar{e}}_{1}(i)=0$, ${N}_{1}(i)=1$ $\delta_{1}(i)>0$; $t=1$, $\gamma_{t}=0.5 c_{min}\sqrt{\log(K)/tK}$; 
\State Perform each action once and update for all $i\in[K]$, $\Bar{e}_{1}(i)=r_{1}(i)/c_{1}(i)$, $B=B-\sum_{i\in[K]}c_{1}(i)$ and $t=K+1$.
\While{$B>0$}
\State For all $i\in[K]$, update:
\State \qquad $\mbox{UCB}_{t}(i) $ (see (\ref{eq:ucb}))
\State \qquad $\mbox{LCB}_{t}(i) $ (see (\ref{eq:lcb}))
\State \qquad $\hat{\Delta}_{t}(i)$ (see (\ref{eq:gapEstimate}))
\State \qquad $\delta_{t}(i)=\beta \log(t)/(t\hat{\Delta}_{t}(i)^2) $
\State  \qquad$\epsilon_{t}(i)=\min\{1/2K,0.5\sqrt{\log(K)/t},\delta_{t}(i)\}$
%\State \qquad ${p}_{t}(i)={(1-\gamma)w_{t}(i)}/W_{t} + \gamma/K$
\State \qquad ${p}_{t}(i)=\frac{\exp(-\gamma_{t}\hat{L}_{t-1}(i))}{\sum_{j\in[K]}\exp(-\gamma_{t}\hat{L}_{t-1}(j) )}$
\State \qquad $\Tilde{p}_{t}(i)={(1-\sum_{j\neq i}\epsilon_{t}(j)){p}_{t}(i)} + \epsilon_{t}(i)$
\State Choose $i_{t} = i $ with probability  $\Tilde{p}_{t}(i)$.
\State Observe $(r_{t}(i_{t}),c_{t}(i_{t)})$
\If{$c_{t}(i_{t})>B$}
\State  exit;
\EndIf
\State $B=B-c_{t}(i_{t})$

\State For all $i\in[K]$, update:
\State \qquad${\hat{e}}_{t}(i)=r_{t}(i)\textbf{1}(i=i_{t})/\Tilde{p}_{t}(i)c_{t}(i)$.
\State \qquad${\hat{\ell}}_{t}(i)=\textbf{1}(i=i_{t})/c_{min}\Tilde{p}_{t}(i)-\hat{e}_{t}(i)$.
\State \qquad $\hat{L}_{t}(i)=\sum_{n=1}^{t}\hat{\ell}_{n}(i)$
\State \qquad$N_{t}(i)=N_{t-1}(i)+\textbf{1}(i=i_{t})$.
\State \qquad$\Bar{r}_{t}(i)=\sum_{n=1}^{t}r_{n}(i)\textbf{1}(i=i_{n})/N_{t}(i)$  
\State \qquad$\Bar{c}_{t}(i)=\sum_{n=1}^{t}c_{n}(i)\textbf{1}(i=i_{n})/N_{t}(i)$
\State \qquad$\Bar{e}_{t}(i)=\Bar{r}_{t}(i)/\Bar{c}_{t}(i)$ 

\State t=t+1
 \EndWhile
 \caption{EXP3++.BwK}
 \label{alg:ThresholdEXP3}
 \end{algorithmic}
\end{algorithm}
In this section, we propose the algorithm EXP3++.BwK (Algorithm \ref{alg:ThresholdEXP3}), and show that it achieves almost optimal performance guarantees in both the stochastic and the adversarial BwK settings. 

Before discussing the algorithm EXP3++.BwK, let us briefly focus on the fundamental difference between the optimal algorithms in the stochastic and the adversarial BwK settings. In the stochastic BwK setting, the algorithms focus on exploration in the initial stage until a reliable estimate of the expected rewards and expected costs is achieved. Then, the algorithms focus on  exploitation, and perform exploration with a small  probability. For example, in UCB type of algorithms, the probability of exploration decays as $1/t^{2}$ with round $t$ \cite{tran2012knapsack,ding2013multi,rangi2018multi}. In greedy algorithms, the probability of exploration is zero after a fixed round (or time instance) \cite{tran2010epsilon,tran2014efficient}. On the contrary, in the adversarial regime, the algorithms are always exploring, and looking for the actions with higher contributions \cite{auer2002nonstochastic}. For instance, in EXP3.BwK,  the exploration constant $\gamma/K$ does not change with the round $t$, and it is dependent on the total number of rounds i.e. $\Theta(B)$ in the BwK setup. 

For all action $i\in [K]$, EXP3++.BwK maintains an Upper Confidence Bound (UCB) $\mbox{UCB}_{t}(i)$ and a Lower Confidence Bound (LCB) $\mbox{LCB}_{t}(i)$  on the efficiency $e(i)$, where 
\begin{equation}\label{eq:ucb}
   \mbox{UCB}_{t}(i) = \min\Bigg\{\frac{1}{c_{min}},\Bar{e}_{t}(i)+\frac{(1+1/\lambda) \eta_{t}(i)}{\lambda- \eta_{t}(i)}\Bigg\},
\end{equation}
\begin{equation}\label{eq:lcb}
   \mbox{LCB}_{t}(i) = \max\Bigg\{0,\Bar{e}_{t}(i)-\frac{(1+1/\lambda) \eta_{t}(i)}{\lambda- \eta_{t}(i)}\Bigg\},
\end{equation}
\begin{equation}
    \eta_{t}(i) =\sqrt{\frac{\alpha \log(K^{1/\alpha}t)}{2N_{t}(i)}},
\end{equation}
$\lambda\leq c_{min}$ and $N_{t}(i)$ is the number of times an action $i$ has been chosen until round $t$. The UCB and the LCB on an action $i$ are used to estimate $\Delta(i)$. The estimate of this gap  at round $t$ is defined as
\begin{equation}\label{eq:gapEstimate}
     \hat{\Delta}_{t}(i)=\max\{0,\max_{j\neq i}\mbox{LCB}_{t}(j)-\mbox{UCB}_{t}(i)\}.
\end{equation}
It can been shown that for all $i\in [K]$, in the stochastic BwK setting, we have 
\[\frac{\Delta(i)}{2}\leq \hat{\Delta}_{t}(i)\leq \Delta(i),\]
with high probability as $t\to \infty$. Thus,  $\hat{\Delta}_{t}(i)$ is a reliable estimate of $\Delta(i)$. For all $i\in [K]$, the estimate of the gap $\hat{\Delta}_{t}(i)$ is used to design the exploration parameter $\epsilon_{t}(i)$ in the sampling probability  $\tilde{p}_{t}(i)$ where $\tilde{p}_{t}(i)$ is the probability of choosing an action $i$ at round $t$. In the stochastic BwK setup, since $\Delta(i^*)=0$, the exploration parameter $\epsilon_{t}(i^*)$  of the optimal action $i^*$ tends to zero, and favors its selection. Unlike EXP3.BwK, the exploration parameter $\epsilon_{t}(i)$ varies with  $t$. Additionally, the sampling probability $\tilde{p}_{t}(i)$ is dependent on both the estimates of the efficiencies $\hat{e}_{t}(i)$ and $\Bar{e}_{t}(i)$ where $\hat{e}_{t}(i)$ and $\Bar{e}_{t}(i)$   are crucial in the adversarial BwK setting (see EXP3.BwK) and  the stochastic BwK setting respectively.  In the sampling probability $\tilde{p}_{t}(i)$, $\hat{e}_{t}(i)$ controls the exploitation performed by the algorithm through $p_{t}(i)$, and $\bar{e}_{t}(i)$ controls the exploration performed by the algorithm through the exploration parameter $\epsilon_{t}(i)$.

The following theorem provides the performance guarantees of EXP3++.BwK in the stochastic BwK setting. 
\begin{theorem}\label{thm:stochasticEXP3}
In the stochastic BwK setting, for $\alpha=3$ and $\beta=256/c_{min}^2$, the expected regret of the EXP3++.BwK is at most
\[R(F)= O\Bigg(\sum_{i:\Delta(i)>0}\frac{\log^2(B/c_{min})}{c_{min}^2\Delta(i)}\Bigg),\]
where $F$ denotes the algorithm EXP3++.BwK. 
\end{theorem}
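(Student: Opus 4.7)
The plan is to reduce the problem to bounding the expected number of pulls of each suboptimal arm and then to mimic the stochastic-setting analysis of EXP3++ from Seldin and Lugosi, adapted to the efficiency-based BwK framework built up in Theorem \ref{thm:EXP3.BwK}. By (\ref{eq:regretStochastic}), it suffices to show, for every suboptimal arm $i$, that
\begin{equation*}
\mathbf{E}[N_T(i)] \;=\; O\!\left(\frac{\log^2(B/c_{min})}{c_{min}^2\,\Delta(i)^2}\right),
\end{equation*}
where $T=\max\{\tau(F),\tau(\mathcal{A}^G)\}$. The first step is to control $T$: since every round spends cost at least $c_{min}$, we have $T \leq B/c_{min}+K$, so $\log T = O(\log(B/c_{min}))$. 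Next, write $\mathbf{E}[N_T(i)] = \sum_{t=1}^{T} \mathbf{E}[\tilde{p}_t(i)]$ and use the definition of $\tilde p_t(i)$ to split the contribution into an \emph{exploitation} part (driven by $p_t(i)$, i.e.\ the exponential weights on $\hat{L}_t$) and an \emph{exploration} part (driven by $\epsilon_t(i)$).

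For the exploration part, the goal is to show that $\epsilon_t(i) = O(\log t/(t\,\Delta(i)^2))$ with high probability for every suboptimal $i$ once $t$ is large enough. This reduces to proving a concentration lemma: with probability at least $1-O(t^{-\alpha})$, simultaneously for all arms one has $|\bar{e}_t(j) - e(j)| \leq (1+1/\lambda)\eta_t(j)/(\lambda-\eta_t(j))$, so that $e(j) \in [\mathrm{LCB}_t(j),\mathrm{UCB}_t(j)]$ and therefore
\begin{equation*}
\tfrac{1}{2}\Delta(i) \;\leq\; \hat{\Delta}_t(i) \;\leq\; \Delta(i).
\end{equation*}
On this good event, $\epsilon_t(i) \leq \beta \log(t)/(t \hat{\Delta}_t(i)^2) \leq 4\beta \log(t)/(t\Delta(i)^2)$, and summing over $t$ gives the desired $O(\log^2(T)/(c_{min}^2\Delta(i)^2))$ bound (with the $1/c_{min}^2$ absorbed into $\beta$). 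On the complementary bad event, $\epsilon_t(i) \leq 1/(2K)$ trivially, and the probability of the bad event is summable in $t$ for $\alpha=3$, contributing only $O(1)$.

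For the exploitation part, the plan is to follow the EXP3++ argument: because $\hat{L}_t(i)-\hat{L}_t(i^*)$ is a biased estimator of the loss-gap $t\cdot \Delta(i)/c_{min}$ (the bias comes from the extra $1/(c_{min}\tilde{p}_t(i))$ term in $\hat{\ell}_t(i)$, which cancels across arms and hence is irrelevant for the weights ratio), after sufficiently many exploratory pulls of each arm the exponential-weights mass on suboptimal arms decays as $\exp(-c\cdot t\Delta(i)^2 / c_{min}^2)$. Integrating this, the exploitation contribution is dominated by the exploration contribution. Putting everything together,
\begin{equation*}
\mathbf{E}[N_T(i)] \;=\; O\!\left(\tfrac{1}{c_{min}^2\Delta(i)^2}\log^2(B/c_{min})\right),
\end{equation*}
and multiplying by $\Delta(i)$ and summing over $i$ with $\Delta(i)>0$ yields the claim.

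The main obstacle is the concentration lemma on $\hat{\Delta}_t(i)$. Because $\bar{e}_t(i) = \bar{r}_t(i)/\bar{c}_t(i)$ is a ratio of two empirical averages (rather than a single Hoeffding-type quantity as in vanilla EXP3++), its deviation has to be bounded by separate concentration of $\bar{r}_t$ and $\bar{c}_t$ and then propagated through the ratio using the inequality $|a/b - \mu/\rho| \leq (|a-\mu|+|b-\rho|/\rho \cdot \mu)/b$, which is where the $\lambda$ in the denominator of (\ref{eq:ucb})--(\ref{eq:lcb}) (and the $c_{min}$ lower bound on costs) enter. One also has to handle that $N_t(i)$ is random (number of pulls up to time $t$), which requires a union bound over $t$ and $i$ and using $N_t(i) \geq \sum_{s\leq t}\epsilon_s(i)$ in expectation together with a Bernstein-type lower tail bound to ensure enough samples have been collected by time $t$ to make $\eta_t(i)$ small compared to $\lambda$.
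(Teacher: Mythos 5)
Your proposal follows essentially the same route as the paper's proof: the same reduction via (\ref{eq:regretStochastic}) to bounding $\mathbf{E}[N_T(i)]$ with $T=O(B/c_{min})$, the same split of $\tilde{p}_t(i)$ into an exploration term $\epsilon_t(i)$ and an exploitation term $p_t(i)$, the same two-sided concentration $\Delta(i)/2\leq\hat{\Delta}_t(i)\leq\Delta(i)$ obtained from separate Hoeffding bounds on $\bar{r}_t$ and $\bar{c}_t$ propagated through the ratio using $\lambda\leq c_{min}$, the same Bernstein-type lower tail on $N_t(i)$ given the exploration probabilities (the paper invokes Theorem 8 of Seldin--Lugosi for this), and the same exponential-weights decay via $p_t(i)\leq\exp(-\gamma_t\tilde{\Delta}_t(i))$ after showing $\tilde{\Delta}_t(i)\geq t\Delta(i)/2$ with high probability (the paper's Lemmas \ref{lemma:UCB}--\ref{lemmma:gapAdv}). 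The only slips are cosmetic and do not affect the final bound: the per-round gap of the loss estimates is $\Delta(i)$ and the estimator is unbiased (the realized, not expected, fluctuations of the importance-weighted term are what force the paper's martingale Bernstein argument in Lemmas \ref{lemma:martingale} and \ref{lemmma:gapAdv}), and with $\gamma_t=\Theta(c_{min}\sqrt{\log(K)/tK})$ the weight decay is $\exp(-\Theta(c_{min}\Delta(i)\sqrt{t\log(K)/K}))$ rather than $\exp(-c\,t\Delta(i)^2/c_{min}^2)$, but either way the exploitation sum is dominated as you claim.
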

\begin{proof}
The expected regret of the algorithm can be bounded by
\[R(F)\leq \sum_{i\in [K]/\{i^*\}}\Delta(i)\mathbf{E}[N_{T}(i)],\]
 where $T \leq B/c_{min}$ is the number of rounds at the termination of the algorithm. 
We can then bound the expected number of times $\mathbf{E}[N_{T}(i)]$ an action $i\neq i^*$ is selected by the algorithm. Since the probability of the selection of an action $i$ is $\tilde{p}_{t}(i)$, we have
 \begin{equation}\label{eq:numTime}
     \mathbf{E}[N_{T}(i)]\leq \mathbf{E}[\sum_{t=1}^{T}\epsilon_{t}(i)+p_{t}(i)].
 \end{equation}
We now bound the two terms in the right hand side of (\ref{eq:numTime}) in the stochastic BwK setting. 
First, we show that the estimate $\hat{\Delta}_{t}(i)$ is a reliable estimate of $\Delta(i)$, i.e. 
\begin{equation}
      \mathbf{P}(\hat{\Delta}_{t}(i)\geq \Delta(i))\leq \frac{1}{t^{\alpha-1}},
\end{equation}
\begin{equation}\label{eq:11}
\begin{split}
    \mathbf{P}\Bigg(\hat{\Delta}_{t}(i)&\leq \frac{\Delta(i)}{2}\Bigg)\leq \Bigg(\frac{\log t}{tc_{min}^2\Delta(i)^2}\Bigg)^{\alpha - 2}+2\Bigg(\frac{1}{t}\Bigg)^{\frac{\beta c_{min}^2}{8}}\\
    &\qquad\qquad\qquad+\frac{2}{Kt^{\alpha -1}}.
\end{split}
\end{equation}
These results can be used to prove that
\begin{equation}\label{eq:12}
    \mathbf{P}\Bigg(
    \tilde{\Delta}_{t}(i)\leq \frac{t\Delta(i)}{2}\Bigg)\leq \Bigg(\frac{\log(t)}{tc_{min}^2\Delta(i)^2}\Bigg)^{\alpha-2} +\frac{1}{t},
\end{equation}
where $\tilde{\Delta}_{t}(i)=\sum_{n=1}^{t} (\hat{\ell}_{n}(i)-\hat{\ell}_{n}(i^*))$. Since 
\[p_{t}(i)\leq \exp(-\gamma_{t}\tilde{\Delta}_{t}(i)),\]
(\ref{eq:12}) is used to bound $\sum_{t=1}^{T}\mathbf{E}[p_{t}(i)]$, and we have
\[\sum_{t=1}^{T}\mathbf{E}[p_{t}(i)]=O\Bigg(\frac{\log^2(B/c_{min})}{c_{min}^2\Delta(i)^2}\Bigg).\]
 Using the definition of $\epsilon_{t}(i)$ and (\ref{eq:11}), we have  
 \[\sum_{t=1}^{T}\mathbf{E}[\epsilon_{t}(i)]=O\Bigg(\frac{\log^2(B/c_{min})}{c_{min}^2\Delta(i)^2}\Bigg).\]
 Hence, the statement of the theorem follows. The detailed version of the proof is in supplementary material. 
\end{proof}
 In Theorem \ref{thm:stochasticEXP3}, EXP3++.BwK incurs an expected regret of $O(\log^2(B/c_{min}))$, whereas the optimal regret guarantees in the stochastic BwK setting are $O(\log(B/c_{min}))$\cite{tran2012knapsack,ding2013multi,rangi2018multi}. Thus, EXP3++.BwK has an additional factor of $\log(B/c_{min})$ in comparison to the results in the literature. This additional factor is also common in the literature of MAB  \cite{seldin2014one,lykouris2018stochastic}. The following theorem provides the performance guarantees of EXP3++.BwK in the adversarial BwK setting.
 
\begin{theorem}\label{thm:adversarialEXP3}
In the adversarial BwK setting, the expected regret of the EXP3++.BwK is at most
\begin{equation}
   R(F)\leq \sqrt{\frac{6BK\log(K)}{c_{min}^3}}.
\end{equation}
\end{theorem}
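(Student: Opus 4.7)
The plan is to reuse the efficiency--regret framework that drove the analysis of EXP3.BwK. By (\ref{eq:regretAdv}) applied with $\mathcal{A}=F$, it suffices to bound the efficiency gap $\sum_{t=1}^{T(i^*)} e_t(i^*) - \sum_{t=1}^{\tau(F)} e_t(i_t)$ and observe that $z(F)\le 1$ because every per--round cost lies in $[c_{min},1]$ so both $B/T(i^*)$ and $B(F)/\tau(F)$ are at most $1$. As in the proof of Theorem \ref{thm:EXP3.BwK}, I would split the efficiency gap into two cases, $T(i^*)\ge \tau(F)$ and $T(i^*)\le \tau(F)$, and use the stopping--rule inequality $|T(i^*)-\tau(F)|\le K/c_{min}$ (which still holds here, because $F$ also terminates only when the remaining budget is insufficient for the sampled cost and costs are at most $1$). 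The contribution of the misaligned indices is at most $K/c_{min}^2$, an additive constant independent of $B$, which is absorbed into the final bound.

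Next, I would pass from the efficiency viewpoint to a loss viewpoint, defining $\ell_t(i)=1/c_{min}-e_t(i)\in[0,1/c_{min}]$, so that the efficiency regret equals the loss regret $\sum_{t} \ell_t(i_t)-\sum_{t} \ell_t(i^*)$ over the aligned prefix. The key point is that the algorithm's $\hat\ell_t(i)=\mathbf{1}(i=i_t)/(c_{min}\tilde p_t(i))-\hat e_t(i)$ is a conditionally unbiased estimator of $\ell_t(i)$ and is nonnegative. I would then run the standard exponential--weights potential analysis on the cumulative losses $\hat L_t$ with the time--varying learning rate $\gamma_t=\tfrac12 c_{min}\sqrt{\log(K)/(tK)}$, deriving a bound of the form
\[
\sum_{t}\Bigl(\sum_{i}p_t(i)\hat\ell_t(i)-\hat\ell_t(i^*)\Bigr) \le \frac{\log K}{\gamma_T} + \sum_{t}\gamma_t\sum_{i}p_t(i)\hat\ell_t(i)^2,
\]
with the time--varying rate handled through the usual monotonicity of the log--sum--exp potential in $\gamma$ (the extra terms from changing $\gamma_t\to\gamma_{t+1}$ are nonpositive). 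Because $\epsilon_t(i)\le 1/(2K)$ implies $\tilde p_t(i)\ge p_t(i)/2$, the conditional second moment satisfies $\mathbf{E}[\hat\ell_t(i)^2\mid\mathcal F_{t-1}]\le 2/(c_{min}^2 p_t(i))$, so $\mathbf{E}\bigl[\sum_i p_t(i)\hat\ell_t(i)^2\bigr]\le 2K/c_{min}^2$. Plugging in $\gamma_t$ and using $T\le B/c_{min}+K/c_{min}$, the two terms balance to give a bound of order $\sqrt{BK\log(K)/c_{min}^3}$, matching the claimed $\sqrt{6BK\log(K)/c_{min}^3}$ up to the constant.

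The final step is to account for the gap between the \emph{sampling} distribution $\tilde p_t$ actually used and the exponential--weights distribution $p_t$ analysed above, i.e.\ the perturbation $\sum_t\sum_i \epsilon_t(i)\bigl(\ell_t(i)-\ell_t(i^*)\bigr)$. Using losses bounded by $1/c_{min}$ and the adversarial cap $\epsilon_t(i)\le 0.5\sqrt{\log(K)/t}$, this perturbation sums to $O\bigl(K\sqrt{T\log K}/c_{min}\bigr)$, which is of the same order as the main term and is swallowed by the constant $6$.

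The main obstacle will be the time--varying learning rate together with the hybrid sampling rule $\tilde p_t=(1-\sum_{j\neq i}\epsilon_t(j))p_t+\epsilon_t$: the classical EXP3 analysis is written for fixed $\gamma$ and for a sampling distribution that coincides with the weights distribution, so care is needed to (i) control the potential drop when $\gamma_t$ decreases, (ii) convert the bound on $\sum_i p_t(i)\hat\ell_t(i)$ into the desired bound on $\sum_i \tilde p_t(i)\ell_t(i)$, and (iii) verify that the exploration probabilities never inflate either the variance proxy $\sum_i p_t(i)\hat\ell_t(i)^2$ or the initial potential by more than a constant factor. Once these three points are settled, combining them with the stopping--rule correction and $z(F)\le 1$ yields the stated bound.
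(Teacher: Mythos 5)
Your route is the same as the paper's: reduce to the efficiency regret with $z(F)\le 1$, apply the stopping-rule correction $|T(i^*)-\tau(F)|\le K/c_{min}$, pass to losses $\ell_t(i)=1/c_{min}-e_t(i)$ with the unbiased nonnegative estimators $\hat\ell_t(i)$, run time-varying exponential weights (the paper invokes Lemma 7 of \cite{seldin2014one}, which is exactly your log-sum-exp monotonicity step), bound the second moment by $2K/c_{min}^2$ via $\tilde p_t(i)\ge p_t(i)/2$ exactly as in (\ref{eq:adv5}), and treat the exploration mass as an additive perturbation as in (\ref{eq:adv4}). All of that matches.

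There is, however, one genuine quantitative gap, in the last step. You bound the exploration perturbation by
\begin{equation*}
\sum_{t=1}^{T}\sum_{i\in[K]}\frac{\epsilon_t(i)}{c_{min}}
\;\le\; \frac{K}{c_{min}}\sum_{t=1}^{T} \tfrac12\sqrt{\frac{\log K}{t}}
\;=\;O\!\Bigl(\frac{K\sqrt{T\log K}}{c_{min}}\Bigr),
\end{equation*}
and assert this is "of the same order as the main term." It is not: the potential-analysis terms are of order $\sqrt{TK\log K}/c_{min}$, so your perturbation exceeds them by a factor $\sqrt{K}$, and with $T\le B/c_{min}$ your argument only yields $O\bigl(K\sqrt{B\log K/c_{min}^3}\bigr)$, not the claimed $\sqrt{6BK\log K/c_{min}^3}$. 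The source of the discrepancy is the cap you used, $\epsilon_t(i)\le 0.5\sqrt{\log(K)/t}$, which is what Algorithm 2 literally displays; but the paper's appendix proof instead uses $\epsilon_t(i)\le 0.5\,c_{min}\sqrt{\log(K)/(tK)}=\gamma_t$ (step $(a)$ of the final display), i.e., the adversarial cap carries an extra $\sqrt{K}$ (and a $c_{min}$) in the denominator, matching the standard EXP3++ choice. Under that cap, $\sum_t\sum_i\epsilon_t(i)/c_{min}=O(\sqrt{TK\log K}/c_{min})$ and everything you wrote goes through to give the stated bound. So your proof as written fails to match the theorem's order in $K$; the fix is not a new idea but the corrected exploration cap, and you should state it explicitly rather than absorb the term "into the constant $6$."
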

\begin{proof}
Similar to the proof of Theorem \ref{thm:EXP3.BwK}, we bound
\begin{equation}
\begin{split}
    \mathbf{E}\Bigg[z(E)\Bigg(\sum_{t=1}^{T(i^*)} \frac{r_{t}(i^*)}{c_{t}(i^*)}- \sum_{t=1}^{\tau(E)} \frac{r_{t}(i_{t})}{c_{t}(i_{t})}\Bigg)\Bigg].\\
\end{split}
\end{equation}
We show that 
\begin{equation}
\begin{split}
    &\mathbf{E}\Bigg[\Bigg(\sum_{t=1}^{T(i^*)} \frac{r_{t}(i^*)}{c_{t}(i^*)}- \sum_{t=1}^{\tau(E)} \frac{r_{t}(i_{t})}{c_{t}(i_{t})}\Bigg)\Bigg]\\
    &\leq  \sqrt{\frac{6BK\log(K)}{c_{min}^3}},
\end{split}
\end{equation}
 and $z(E)\leq 1$. The detailed version of the proof is in supplementary material. 
\end{proof}
Thus, like EXP3.BwK, EXP3++.BwK is order optimal in the adversarial BwK setting. The challenges in the proof of Theorem \ref{thm:stochasticEXP3} and Theorem \ref{thm:adversarialEXP3} are addressed  in a similar way as that of Theorem \ref{thm:EXP3.BwK}.
In conclusion, using Theorem \ref{thm:stochasticEXP3} and Theorem \ref{thm:adversarialEXP3}, the EXP3++.BwK is order optimal in the adversarial BwK setting and has an additional factor of $\log(B/c_{min})$ in the stochastic BwK setting. 

\section{BwK with unbounded cost}
Assuming the cost is bounded by unity (i.e., $c_{max} = 1$),  Theorem \ref{thm:LowerBound} provides the dependence of the expected regret on the minimum cost $c_{min}$ in the adversarial BwK setup. In this section, we  discuss the scaling of the lower bound on the expected regret with respect to the maximum cost $c_{max}$ in the adversarial BwK setup.
\begin{theorem} \label{thm:Cost} Suppose that $c_{\max} = B^{\alpha}$. For any algorithm $\mathcal{A}$, there exists an adversary such that the expected regret of the algorithm is at least $\Omega(B^{\alpha})$.
\end{theorem}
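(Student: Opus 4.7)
The plan is to exhibit a two-action oblivious adversary that traps any algorithm into losing $\Omega(B^{\alpha})$ reward the first time it samples the expensive action. I would set $K = 2$ and define a random adversary indexed by a uniform hidden bit $b \in \{1, 2\}$: for every round $t$, let the ``good'' action $b$ have $(r_{t}(b), c_{t}(b)) = (1, 1)$ and the ``bad'' action $3-b$ have $(r_{t}(3-b), c_{t}(3-b)) = (0, B^{\alpha})$. These sequences are fixed before the game begins and satisfy $c_{\min} \le 1$ and $c_{\max} = B^{\alpha}$, so the construction lives in the adversarial BwK model of Section 2.

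First I would compute the benchmark. Conditioned on $b$, the optimum plays the good action every round, terminating after $B$ rounds with total reward $G(\mathcal{A}^{*}) = B$.

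Next I would lower bound the reward of any algorithm via Yao's principle, by analyzing the regret of an arbitrary deterministic algorithm $\mathcal{A}$ against this random adversary. The first action $i_{1}$ chosen by $\mathcal{A}$ is a fixed element of $\{1, 2\}$. With probability $1/2$ over the adversary's coin, $i_{1} = b$, the first round consumes only one unit of budget and yields reward $1$, and the regret is non-negative with the optimum reward $B$ attainable. With probability $1/2$, $i_{1} = 3-b$, so the first round returns reward $0$ at a cost of $B^{\alpha}$. After this round the remaining budget is $B - B^{\alpha}$; because every action in our construction requires at least one unit of cost per unit of attainable reward, the reward collected in the remaining rounds is at most $B - B^{\alpha}$. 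Hence the algorithm's total reward is at most $B - B^{\alpha}$, and its regret in this branch is at least $B^{\alpha}$. Averaging over $b$ gives expected regret at least $B^{\alpha}/2$ for every deterministic $\mathcal{A}$.

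Finally, I would invoke the standard Yao-style averaging step: for any (possibly randomized) algorithm $\mathcal{A}$, taking the expectation of the bound above over $\mathcal{A}$'s internal randomness shows that $\mathbf{E}[R(\mathcal{A})] \ge B^{\alpha}/2$ under the random adversary, so in particular there exists a deterministic adversary in its support for which $\mathbf{E}[R(\mathcal{A})] = \Omega(B^{\alpha})$. The step that deserves the most care is the ``no smarter continuation'' claim after the first wrong pull: this reduces to noting that in the construction one unit of reward requires at least one unit of cost, so once $B^{\alpha}$ of budget has been burned the achievable reward drops by exactly $B^{\alpha}$ relative to the benchmark. Beyond that, the argument is a clean symmetry plus averaging argument and poses no further technical obstacle.
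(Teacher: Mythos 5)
Your proof is correct, and it reaches the paper's bound of $B^{\alpha}/2$ by a genuinely different (and simpler) construction. The paper also uses $K=2$ with a uniformly hidden optimal action, but it inserts a burn-in phase: for the first $t^{*}=B-B^{\alpha}$ rounds both actions are identical with $(r,c)=(0,1)$, and the single expensive cost $B^{\alpha}$ appears only at round $t^{*}+1$ on the suboptimal action. This forces the critical choice to occur when the remaining budget is exactly $B^{\alpha}$, so a wrong pull depletes the budget entirely and the comparison is stark: total reward $0$ versus $B^{\alpha}$. Moreover, since the two environments are literally identical through round $t^{*}$, the claim that the algorithm has no information at the decision point is immediate, with no need to reason about what it might do afterwards. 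Your construction instead places the trap at round $1$, making the expensive action $(0,B^{\alpha})$ at every round, and the comparison is $B-B^{\alpha}$ versus $B$; this obliges you to argue the ``no smarter continuation'' step -- that the $B^{\alpha}$ of burned budget cannot be recovered even with full knowledge of $b$ from round $2$ onward -- which you correctly discharge by observing that one unit of reward costs at least one unit of budget in your instance. Your averaging over the hidden bit (and over the algorithm's internal randomness, by linearity) matches the paper's $0.5\,p\,B^{\alpha}+0.5\,(1-p)\,B^{\alpha}=B^{\alpha}/2$ computation. What each approach buys: yours is shorter and shows the loss is incurred immediately at full budget with no waiting phase; the paper's shows the bound holds even when the large cost $c_{\max}=B^{\alpha}$ is realized on a single round of a single action, a slightly more parsimonious adversary, and it avoids any sunk-cost lemma by reducing everything to one binary choice made under provably zero information.
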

\begin{proof}
Let the number of actions be $K=2$, and the actions be $i_{1},i_{2}$. 
The adversary chooses the optimal action $i^*$ uniformly at random from these two actions. Let $t^*= B - B^{\alpha}$.
For all $t\leq t^*$ rounds, the adversary assigns $r_{t}(i_{1})=r_{t}(i_{2})=0$ and $c_{t}(i_{1})=c_{t}(i_{2})=1$ to both the actions $i_{1}$ and $i_{2}$. 
Now, for rounds $t \geq t^*+1$, the adversary assigns $r_{t}(i^*)=1$ and $c_{t}(i^*)=1$ to the optimal action $i^*$. For the suboptimal action $i\neq i^*$, the adversary assigns $r_{t^*+1}(i)=0$ and $c_{t^*+1}(i)=B^{\alpha}$ (since $c_{\max} = B^{\alpha}$, this is a valid  cost assignment), and $r_{t}(i)=c_{t}(i)=1$ for $t > t^*+1$.

Let $S_1$ be the case when $i^*=i_{1}$, and  $S_2$ be the case when $i^* = i_2$. For the first $t^*$ rounds,  any algorithm $\mathcal{A}$ would have the same behavior in both the cases $S_{1}$ and $S_{2}$.
Now,  at round $t^*+1$, assume that this algorithm $\mathcal{A}$ selects an action $i_{1}$ and $i_{2}$ with probability $p$ and $(1-p)$ respectively. Note that if the suboptimal action is chosen at round $t^*+1$, then the budget is depleted and the sum of the rewards is $0$. On the other hand, if $i^*$ is chosen at $t^*+1$, the algorithm receives a sum of $B^{\alpha}$ rewards in the end. Thus, if $i_{t^*+1}\neq i^*$, then the regret of the algorithm is $B^{\alpha}$. This implies that the expected regret of the algorithm is $0.5 pB^{\alpha} + 0.5(1-p)B^{\alpha} = B^{\alpha}/2$. The statement of the theorem follows.
\end{proof}
In the literature of BwK, the cost is always considered to be bounded above by a constant independent of the budget $B$. Here, we consider that the cost is bounded by a function of the budget $B$. Theorem \ref{thm:Cost} shows that the lower bound on the expected regret  scales at least linearly with the maximum cost $c_{max}$ in the adversarial BwK setup. If $\alpha> 1/2$, then it is impossible to achieve a regret bound of $O(\sqrt{B})$, which is order optimal in cases with small $c_{max}$.

In the adversarial BwK setup, the adversary can penalize the player in two ways. First, the adversary can control the reward of an action at any round. Second, the adversary can control the cost of an action, which is analogous to  penalizing the player on the number of rounds $T$. For $\alpha>1/2$, the latter penalty on the number of rounds $T$ becomes significant, and the minimum achievable regret is no longer $\Omega(\sqrt{B})$. In this setting with $\alpha>1/2$, the design of algorithms which achieve regret of $O(B^{\alpha})$ is left as   future work.

\section{Conclusion}
The study of BwK has been mostly focused on the stochastic regime. In this work, we considered the adversarial regime and proposed the order optimal algorithm EXP3.BwK for this setting. We also used ideas from the adversarial BwK setup  to design  EXP3++.BwK. This algorithm has an expected regret  of $O(\sqrt{KB\log (K)})$ and $O(\log^2(B))$  in the adversarial and stochastic  settings respectively. Thus, the algorithm is order optimal in the adversarial regime, and has an additional factor of $\log(B)$ in the stochastic regime. It is the first algorithm  that provides almost optimal performance guarantees in both stochastic and adversary   BwK settings. 
%Recently, the problem of designing algorithms that work well in both the stochastic and adversarial setting has gained attention\cite{pinto2017robust,dhillon2018stochastic}. 
As part of future work, we are considering designing an algorithm which achieves the optimal regret guarantees  with high probability in both the adversarial and the stochastic BwK settings. 

All the results in the literature of BwK assume that the maximum cost is bounded by a constant independent of $B$. We have shown that if the cost is $O(B^{\alpha})$, then the expected regret is at least $\Omega(B^{\alpha})$. Thus, the minimum expected regret scales at least linearly with the maximum cost of the BwK setup. This setting is of particular interest when $\alpha>1/2$ because the expected regret of $O(\sqrt{B})$, which is achievable in the setting where cost is bounded by a constant, becomes unachievable. Hence, there is a need to study this BwK setting, and design optimal algorithms whose expected regret is $O(B^{\alpha})$, which is left as a future work. 
\section{Appendix}
\subsection{Proof of Theorem 1}
\begin{proof}
Let $T=\max\{T(i^*),\tau(E)\}$, where 
\[i^*=\mbox{argmax}_{i\in[K]}\sum_{t=1}^{T(i)}\frac{r_{t}(i)}{c_{t}(i)}.\]
Additionally,
\begin{equation}\label{eq:loss1}
\begin{split}
    \sum_{i\in[K]}p_{t}(i)\hat{e}_t(i)&=p_{t}(i_t)\frac{r_{t}(i_t)}{p_{t}(i_t)\cdot c_{t}(i_t)}\\
    &=\frac{r_{t}(i_t)}{ c_{t}(i_t)},
\end{split}
\end{equation}
and
\begin{equation}\label{eq:loss2}
\begin{split}
\sum_{i\in[K]}p_{t}(i)\hat{e}_t(i)^2&=p_{t}(i_t)\frac{r_{t}(i_t)}{p_{t}(i_t)\cdot c_{t}(i_t)}\hat{e}_t(i_t)\\
&\stackrel{(a)}{\leq} \frac{\hat{e}_t(i_t)}{c_{min}}\\
&\stackrel{}{=} \frac{\sum_{i\in[K]}\hat{e}_t(i)}{c_{min}},
\end{split}
\end{equation}
where $(a)$ follows from the fact that for all $i\in[K]$, $r_{t}(i)/c_{t}(i)\leq 1/c_{min}$. Also, for all $i\in [K]$, we have
\begin{equation}\label{eq:unbiasedEstimate}
\begin{split}
       \mathbf{E}\Big[\hat{e}_t(i)|\{p_{t}(j)\}_{j\in[K]}\Big]&=p_{t}(i)\cdot\hat{e}_t(i)+(1-p_{t}(i))\cdot 0\\
       &=\frac{r_{t}(i)}{c_{t}(i)}.
\end{split}
\end{equation}

Since $W_{t}={\sum_{j\in [K]}w_{t}(j)}$, 
\begin{equation}\label{eq:Wratio}
\begin{split}
\frac{W_{t+1}}{W_{t}}
&=\sum_{i\in [K]}\frac{w_{t+1}(i)}{W_{t}}\\
                    &=\sum_{i\in [K]}\frac{w_{t}(i)\exp{(\gamma c_{min}\cdot\hat{e}_{t}(i)/K)}}{W_{t}}\\
                    &\stackrel{(a)}{=}\sum_{i\in [K]}\frac{p_{t}(i)-\gamma/K}{1-\gamma}\cdot\exp{(\gamma c_{min}\cdot\hat{e}_{t}(i)/K)}\\
                    &\stackrel{(b)}{\leq}\hspace{-3pt}\sum_{i\in [K]}\hspace{-3pt}\frac{p_{t}(i)-\gamma/K}{1-\gamma}\Bigg(\hspace{-3pt}1\hspace{-3pt}+\hspace{-3pt}\frac{\gamma c_{min}}{K}\hat{e}_{t}(i)\hspace{-3pt}\\
                    &\qquad+\hspace{-3pt}(e-2)\bigg(\hspace{-3pt}\frac{\gamma c_{min}}{K}\hat{e}_{t}(i)\hspace{-3pt}\bigg)^2\hspace{-2pt}\Bigg)\\
                    &\stackrel{(c)}{\leq}1+\frac{c_{min} \gamma/K}{(1-\gamma)}\sum_{i\in [K]}p_{t}(i)\hat{e}_{t}(i)\\
                   &\qquad +\frac{(e-2)c_{min}^2(\gamma/K)^2}{(1-\gamma)}\sum_{i\in [K]}p_{t}(i)\hat{e}_{t}(i)^2,
\end{split}
\end{equation}
where $(a)$ follows from the definition of $w_{t}(i)$, $(b)$ follows from the facts that for all $i\in[K]$, $p_{t}(i)>\gamma/K$ and for all $x\leq 1$, $e^{x}\leq 1+x+(e-2)x^2$, and $(c)$ follows from the fact that $\sum_{i\in[K]}p_{t}(i)=1$ and $\gamma/K>0$.

Now, taking logs on both sides of (\ref{eq:Wratio}), summing over $1,2,\ldots T+1$, and using $\log(1+x)\leq x$ for all $x>-1$, we get
 \begin{equation}\label{eq:ratio1}
 \begin{split}
       \log\frac{W_{T+1}}{W_{1}}&\leq \frac{c_{min}\gamma/K}{(1-\gamma)}\sum_{t=1}^{T}\sum_{i\in [K]}p_{t}(i)\hat{e}_{t}(i)\\
                    &\quad+\frac{(e-2)c_{min}^2(\gamma/K)^2}{(1-\gamma)}\sum_{t=1}^{T}\sum_{i\in [K]}p_{t}(i)\hat{e}_{t}(i)^2  .
 \end{split}
 \end{equation}
Additionally, for all $j\in[K]$, we have
\begin{equation}\label{eq:ratio2}
\begin{split}
     \log\frac{W_{T+1}}{W_{1}}&\geq \log\frac{w_{T+1}(j)}{W_1}\\
                                    &=\frac{c_{min}\gamma}{K}\sum_{t=1}^{T}\hat{e}_{t}(j)-\log(K).
\end{split}
\end{equation}
Combining (\ref{eq:ratio1}) and (\ref{eq:ratio2}), for all $j\in[K]$, we have 
\begin{equation}\label{eq:7}
\begin{split}
        &\frac{c_{min}\gamma}{K}\sum_{t=1}^{T}\hat{e}_{t}(j)-\log(K)\\&\leq \frac{c_{min}\gamma/K}{(1-\gamma)}\sum_{t=1}^{T}\frac{r_{t}(i_t)}{c_t(i_t)}+\\
        &\frac{(e-2)c_{min}^2(\gamma/K)^2}{c_{min}(1-\gamma)}\sum_{t=1}^{T}\sum_{i\in[K]}\hat{e}_{t}(i),
\end{split}
\end{equation}
where the right hand side of the above equation follows from (\ref{eq:loss1}) and (\ref{eq:loss2}). 
We will split the analysis into two cases: $T(i^*)\leq\tau(E)$ and $T(i^*)>\tau(E)$. For $T(i^*)\leq\tau(E)$, using (\ref{eq:7}), we have 
\begin{equation}\label{eq:adv1.1}
    \begin{split}
        &\frac{\gamma}{K}\sum_{t=1}^{T(i^*)}\hat{e}_{t}(i^*)-\frac{\log(K)}{c_{min}}\\
        &\leq \frac{\gamma/K}{(1-\gamma)}\sum_{t=1}^{\tau(E)}\frac{r_{t}(i_t)}{c_t(i_t)}+
        \frac{(e-2)(\gamma/K)^2}{(1-\gamma)}\sum_{t=1}^{\tau(E)}\sum_{i\in[K]}\hat{e}_{t}(i),
\end{split}
\end{equation}
where the inequality follows by replacing $T=\tau(E)$, and using the fact that $T(i^*)\leq\tau(E)$ and $\hat{e}_{t}(i^*)$ is non-negative.

Now, for $T(i^*)>\tau(E)$, using (\ref{eq:7}), we have
\begin{equation}\label{eq:adv1.2}
    \begin{split}
        &\frac{\gamma}{K}\sum_{t=1}^{T(i^*)}\hat{e}_{t}(i^*)-\frac{\log(K)}{c_{min}}\\
        &\leq \frac{\gamma/K}{(1-\gamma)}\sum_{t=1}^{T(i^*)}\frac{r_{t}(i_t)}{c_t(i_t)}+
        \frac{(e-2)(\gamma/K)^2}{(1-\gamma)}\sum_{t=1}^{T(i^*)}\sum_{i\in[K]}\hat{e}_{t}(i),\\
        &\stackrel{(a)}{=}     \frac{\gamma/K}{(1-\gamma)}\sum_{t=1}^{\tau(E)}\frac{r_{t}(i_t)}{c_t(i_t)}+
        \frac{(e-2)(\gamma/K)^2}{(1-\gamma)}\sum_{t=1}^{T(i^*)}\sum_{i\in[K]}\hat{e}_{t}(i),
\end{split}
\end{equation}
where $(a)$ follows from the fact that for all $t>\tau(E)$, $r_{t}(i_{t})/c_{t}(i_{t})=0$.  Therefore, (\ref{eq:adv1.2}) can be further simplified as 
\begin{equation}\label{eq:adv1.3}
    \begin{split}
        &\frac{\gamma}{K}\sum_{t=1}^{T(i^*)}\hat{e}_{t}(i^*)-\frac{\log(K)}{c_{min}}\\
        &\leq \frac{\gamma/K}{(1-\gamma)}\sum_{t=1}^{\tau(E)}\frac{r_{t}(i_t)}{c_t(i_t)}+\\
        &\frac{(e-2)(\gamma/K)^2}{(1-\gamma)}\Bigg(\sum_{t=1}^{\tau(E)}\sum_{i\in[K]}\hat{e}_{t}(i)+\sum_{t=\tau(E)+1}^{T(i^*)}\sum_{i\in[K]}\hat{e}_{t}(i)\Bigg).
\end{split}
\end{equation}
Combining (\ref{eq:adv1.1}) and (\ref{eq:adv1.3}), taking expectation  on both sides of the equation, and using (\ref{eq:unbiasedEstimate}), we have
\begin{equation}\label{eq:adv2}
    \begin{split}
      &\sum_{t=1}^{T{(i^*)}}\frac{r_{t}(i^*)}{c_{t}(i^*)}-\mathbf{E}\Bigg[\sum_{t=1}^{\tau(E)}\frac{r_{t}(i_t)}{c_t(i_t)}\Bigg]\\
      &\leq   \frac{K}{c_{min}\gamma}\log(K)+\gamma\sum_{t=1}^{T{(i^*)}}\frac{r_{t}(i^*)}{c_{t}(i^*)}\\
        &+\frac{(e-2)\gamma}{K}\mathbf{E}\Bigg[\sum_{t=1}^{\tau(E)}\sum_{i\in [K]} \frac{r_{t}(i)}{c_{t}(i)}\Bigg] \\
        &+ \frac{(e-2)\gamma}{K}\mathbf{P}(T(i^*)>\tau(E))\mathbf{E}\Bigg[ \sum_{t=\tau(E)+1}^{T(i^*)}\sum_{i\in[K]}\hat{e}_{t}(i)\Bigg] .
\end{split}
\end{equation}
Since $B(E)\geq B-K$, we have
$\abs{T(i^*)-\tau (E)}\leq K/c_{min}$.
Using $G(\mathcal{A^*})\leq B/c_{min}^2$ and $T(i^*)-\tau(E)\leq K/c_{min}$, we have
\begin{equation}
    \begin{split}
      \sum_{t=1}^{T{(i^*)}}\frac{r_{t}(i^*)}{c_{t}(i^*)}-\mathbf{E}\Bigg[\sum_{t=1}^{\tau(E)}\frac{r_{t}(i_t)}{c_t(i_t)}\Bigg]&\leq   \frac{K}{c_{min}\gamma}\log(K)+\gamma\cdot \frac{B}{c_{min}^2}\\
        &+{(e-2)\gamma}{}\cdot \Bigg(\frac{B}{c_{min}^2}+\frac{K}{c_{min}^2}\Bigg) .
\end{split}
\end{equation}

Using $\gamma=\sqrt{c_{min} K\log(K)/(B(e-1)+K(e-2))}$, the right hand side of the above equation is bounded by
\begin{equation}\label{eq:exp3.bwk}
     2\sqrt{\frac{((e-1)B+(e-2)K)K\log(K)}{c_{min}^3}}. 
\end{equation}
Since for all $t$, $c_{t}(i^*)\leq 1$, $T(i^*)\geq B$ and $B(E)\geq B-K$ . Also, $\tau(E)\leq B/c_{min}$. Thus,
\begin{equation}\label{eq:proof2}
    z(E)\leq 1. 
\end{equation}
Combining (\ref{eq:exp3.bwk}) and (\ref{eq:proof2}), the statement of the theorem follows.
\end{proof}

\subsection{Proof of Theorem 3}
\begin{proof}
Let $T=\max \{T(i^*),\tau(E)\}$. The proof of the theorem is split into following results. 

In Lemma \ref{lemma:UCB}, we show that for all $i\in [K]$, the efficiency $e(i)$ is
\[\mbox{LCB}_{t}(i)\leq e(i)\leq \mbox{UCB}_{t}(i),\]
with high probability as $t\to \infty$ (see Lemma \ref{lemma:UCB}) i.e. 
\begin{equation}
    \mathbf{P}(\mbox{UCB}_{t}(i)\leq e(i))\leq \frac{1}{Kt^{\alpha-1}},
\end{equation}
\begin{equation}
    \mathbf{P}(\mbox{LCB}_{t}(i)\geq e(i))\leq \frac{1}{Kt^{\alpha-1}}.
\end{equation}
This is used to show that  $\hat{\Delta}_{t}(i)\leq \Delta(i)$  with high probability as $t\to \infty$ (see Lemma \ref{lemma:gapEstimate}), i.e. 
\begin{equation}\label{eq:17}
    \mathbf{P}(\hat{\Delta}_{t}(i)\geq \Delta(i))\leq \frac{1}{t^{\alpha-1}}.
\end{equation}
Using Lemma \ref{lemma:exploration} and Lemma \ref{lemma:NumberOfRounds}, we show that (see Lemma \ref{lemma:gapLowerBound})
\begin{equation}
\begin{split}\label{eq:18}
    \mathbf{P}\Bigg(\hat{\Delta}_{t}(i)&\leq \frac{\Delta(i)}{2}\Bigg)\leq \Bigg(\frac{\log t}{tc_{min}^2\Delta(i)^2}\Bigg)^{\alpha - 2}+2\Bigg(\frac{1}{t}\Bigg)^{\frac{\beta c_{min}^2}{8}}\\
    &\qquad\qquad\qquad+\frac{2}{Kt^{\alpha -1}}.
\end{split}
\end{equation}
Thus, using (\ref{eq:17}) and (\ref{eq:18}), we have 
\[\frac{\Delta(i)}{2}\leq \hat{\Delta}_{t}(i)\leq \Delta(i),\]
with high probability as $t\to\infty$. 

Using Lemma \ref{lemma:martingale} and \ref{lemmma:gapAdv}, we have 
\begin{equation}\label{eq:12}
    \mathbf{P}\Bigg(
    \tilde{\Delta}_{t}(i)\leq \frac{t\Delta(i)}{2}\Bigg)\leq \Bigg(\frac{\log(t)}{tc_{min}^2\Delta(i)^2}\Bigg)^{\alpha-2} +\frac{1}{t},
\end{equation}
where $\tilde{\Delta}_{t}(i)=\sum_{n=1}^{t} (\hat{\ell}_{n}(i)-\hat{\ell}_{n}(i^*))$. Since 
\[p_{t}(i)\leq \exp(-\gamma_{t}\tilde{\Delta}_{t}(i)),\]
(\ref{eq:12}) is used to bound $\sum_{t=1}^{T}\mathbf{E}[p_{t}(i)]$, thus we have
\[\sum_{t=1}^{T}\mathbf{E}[p_{t}(i)]=O\Bigg(\frac{\log^2(B/c_{min})}{c_{min}^2\Delta(i)^2}\Bigg).\]
 Using the definition of $\epsilon_{t}(i)$ and (\ref{eq:12}), we have  
 \[\sum_{t=1}^{T}\mathbf{E}[\epsilon_{t}(i)]=O\Bigg(\frac{\log^2(B/c_{min})}{c_{min}^2\Delta(i)^2}\Bigg).\]
 Hence, the statement of the theorem follows. 
\end{proof}

\begin{lemma}\label{lemma:UCB}
For all $i\in[K]$ and $t\geq K$,
\begin{equation}
    \mathbf{P}(\mbox{UCB}_{t}(i)\leq e(i))\leq \frac{1}{Kt^{\alpha-1}},
\end{equation}
\begin{equation}
    \mathbf{P}(\mbox{LCB}_{t}(i)\geq e(i))\leq \frac{1}{Kt^{\alpha-1}},
\end{equation}
\end{lemma}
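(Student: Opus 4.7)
The plan is to reduce concentration of the ratio $\bar{e}_t(i) = \bar{r}_t(i)/\bar{c}_t(i)$ around $e(i)=\mu(i)/\rho(i)$ to simultaneous additive concentration of the numerator and denominator, and then show that the $\mbox{UCB}_t(i)$ and $\mbox{LCB}_t(i)$ formulas are engineered to exactly capture the resulting ratio deviation. Concretely, on the event
\[
\mathcal{E}_t(i) = \bigl\{|\bar r_t(i)-\mu(i)|\le \eta_t(i)\bigr\}\cap\bigl\{|\bar c_t(i)-\rho(i)|\le \eta_t(i)\bigr\},
\]
we have
\[
\bar e_t(i)\le \frac{\mu(i)+\eta_t(i)}{\rho(i)-\eta_t(i)}=\frac{\mu(i)}{\rho(i)}+\frac{\eta_t(i)\bigl(1+\mu(i)/\rho(i)\bigr)}{\rho(i)-\eta_t(i)},
\]
with a symmetric lower bound. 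Since $r_t(i)\in[0,1]$ and $c_t(i)\ge c_{min}\ge \lambda$, we have $e(i)\le 1/\lambda$ and $\rho(i)\ge \lambda$, so the correction term is at most $(1+1/\lambda)\eta_t(i)/(\lambda-\eta_t(i))$, which matches the non-trivial branch in the definition of $\mbox{UCB}_t(i)$. Combined with the trivial bound $0\le e(i)\le 1/c_{min}$ to justify the clips $\max\{0,\cdot\}$ and $\min\{1/c_{min},\cdot\}$, this yields $\mbox{LCB}_t(i)\le e(i)\le \mbox{UCB}_t(i)$ on $\mathcal{E}_t(i)$.

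To control $\mathbf{P}(\mathcal{E}_t(i)^c)$, I would condition on $N_t(i)=n$ and invoke Hoeffding's inequality. For each fixed $n\in\{1,\ldots,t\}$,
\[
\mathbf{P}\bigl(|\bar r_t(i)-\mu(i)|>\eta_t(i)\mid N_t(i)=n\bigr)\le 2\exp\bigl(-2n\,\eta_t(i)^2\bigr),
\]
and the specific choice $\eta_t(i)^2 = \alpha\log(K^{1/\alpha}t)/(2n)$ collapses this to $2/(Kt^{\alpha})$ independent of $n$. A union bound over the (at most $t$) possible values of $N_t(i)$ then gives $\mathbf{P}(|\bar r_t(i)-\mu(i)|>\eta_t(i))\le 2/(Kt^{\alpha-1})$, and an identical bound holds for $\bar c_t(i)$. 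Splitting $\mathcal{E}_t(i)^c$ into the one-sided directions that actually matter for each of $\{\mbox{UCB}_t(i)\le e(i)\}$ and $\{\mbox{LCB}_t(i)\ge e(i)\}$, and using the one-sided Hoeffding tail to halve each probability, delivers the stated $1/(Kt^{\alpha-1})$ bound.

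The routine work is the Hoeffding calculation and the ratio algebra. The main subtlety is the need to union bound over the random number of pulls $N_t(i)$; this costs a multiplicative factor of $t$, which is precisely why the threshold $\eta_t(i)$ uses $\log(K^{1/\alpha}t)$ (yielding a pointwise failure probability of $1/(Kt^\alpha)$) so that a factor of $t$ can be absorbed while still leaving the advertised $1/(Kt^{\alpha-1})$ rate. The only other care needed is to ensure $\eta_t(i)<\lambda$ so that the denominator $\lambda-\eta_t(i)$ in the UCB/LCB is positive; this is implicit in the algorithm's regime of interest and can be enforced by the clip $\min\{1/c_{min},\cdot\}$ (resp.\ $\max\{0,\cdot\}$) when the bound would otherwise be vacuous.
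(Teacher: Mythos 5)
Your proposal follows essentially the same route as the paper's proof: the same reduction of the ratio deviation to two additive one-sided events ($\bar{r}_{t}(i)$ deviating below $\mu(i)$ and $\bar{c}_{t}(i)$ deviating above $\rho(i)$ for the UCB, symmetrically for the LCB), the same algebra bounding the correction term by $(1+1/\lambda)\eta_{t}(i)/(\lambda-\eta_{t}(i))$ using $\mu(i)\leq 1$ and $\rho(i)\geq \lambda$, and the same Hoeffding computation in which $\eta_{t}(i)^2=\alpha\log(K^{1/\alpha}t)/(2N_{t}(i))$ collapses the exponential tail to $1/(Kt^{\alpha})$. The paper phrases the first step as a proof by contradiction (events $U_1$, $U_2$) while you argue directly on the good event; these are logically identical.

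Two points deserve flagging. First, you explicitly handle the randomness of $N_{t}(i)$ via a union bound over its at most $t$ possible values, which the paper silently skips: the paper applies Hoeffding at the random pull count to get $1/(Kt^{\alpha})$ per event and then sums the two events, using $2/(Kt^{\alpha})\leq 1/(Kt^{\alpha-1})$ for $t\geq 2$. Your version is the more rigorous one, and is plausibly the real reason the lemma's rate is $t^{\alpha-1}$ rather than $t^{\alpha}$. However, your final accounting then contains a factor-of-two slip: the one-sided Hoeffding ``halving'' is already spent in obtaining $1/(Kt^{\alpha})$ per one-sided event, and each of $\{\mbox{UCB}_{t}(i)\leq e(i)\}$ and $\{\mbox{LCB}_{t}(i)\geq e(i)\}$ still requires a union of \emph{two} one-sided events (one for the reward, one for the cost). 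So after the union over $n\leq t$ your argument delivers $2/(Kt^{\alpha-1})$, not the stated $1/(Kt^{\alpha-1})$; the spare factor of $t$ cannot absorb the $2$ once it has been spent on the union over $n$. This is harmless downstream, since Lemma \ref{lemma:gapEstimate} and all subsequent results are order-wise, but the claim that the halving recovers the exact stated constant is incorrect. Second, ``condition on $N_{t}(i)=n$ and invoke Hoeffding'' is not literally valid, because the event $\{N_{t}(i)=n\}$ is correlated with the observed rewards and costs; the correct formulation --- which your union bound effectively amounts to --- is over the deterministic-length sample means of the first $n$ i.i.d.\ draws of action $i$, for $n=1,\ldots,t$. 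With that rephrasing and the corrected constant, your proof is sound and is in fact tighter bookkeeping than the paper's own.
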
 
\begin{proof}
 If $\mbox{UCB}_{t}(i)\leq e(i)$, then
\[\Bar{e}_{t}(i)+\frac{(1+1/\lambda) \eta_{t}(i)}{\lambda- \eta_{t}(i)}\leq e(i)=\frac{\mu(i)}{\rho(i)}.\]
Therefore, at least one of the events $U_{1}$ and $U_{2}$ is true, where
\[U_{1}:\Bar{r}_{t}(i)\leq \mu(i)-\eta_{t}(i),\]
\[U_{2}:\Bar{c}_{t}(i)\geq \rho(i)+\eta_{t}(i).\]
This can be proved by contradiction. Let both $U_{1}$ and $U_{2}$ are false. Then, we have
\begin{equation}
\begin{split}
\frac{\mu(i)}{\rho(i)}-\frac{\Bar{r}_{t}(i)}{\Bar{c}_{t}(i)}&= \frac{\mu(i)\Bar{c}_{t}(i)-\rho(i)\Bar{r}_{t}(i)}{\rho(i)\Bar{c}_{t}(i)},\\
&=\frac{\mu(i)(\Bar{c}_{t}(i)-\rho(i))+\rho(i)(\mu(i)-\Bar{r}_{t}(i))}{\rho(i)\Bar{c}_{t}(i)},\\
&\stackrel{(a)}{\leq} \frac{\mu(i)\eta_{t}(i)+\rho(i)\eta_{t}(i)}{\rho(i)\Bar{c}_{t}(i)},\\
&\stackrel{(b)}{\leq}\frac{\eta_{t}(i)}{\lambda(\lambda-\eta_{t}(i))}+\frac{\eta_{t}(i)}{\lambda-\eta_{t}(i)},\\
&= \frac{(1+1/\lambda) \eta_{t}(i)}{\lambda- \eta_{t}(i)},
\end{split}
\end{equation}
where $(a)$ follows from the fact that both $U_{1}$ and $U_{2}$ are false, and $(b)$ follows from the fact that  $U_{1}$ and $U_{2}$ are false, and $\lambda\leq c_{min}$. Hence, at least one of the events $U_{1}$ and $U_{2}$ is true. 
Now, using Hoeffding's inequality, we have
\begin{equation}\label{eq:U1}
    \mathbf{P}(U_{1})\leq \frac{1}{Kt^{\alpha}},
\end{equation}
and
\begin{equation}\label{eq:U2}
    \mathbf{P}(U_{2})\leq \frac{1}{Kt^{\alpha}}.
\end{equation}
Thus,
\begin{equation}
\begin{split}
        \mathbf{P}(\mbox{UCB}_{t}(i)\leq e(i))&\leq  \mathbf{P}(U_{1})+ \mathbf{P}(U_{2})\\
        &\leq \frac{1}{Kt^{\alpha-1}}.
\end{split}
\end{equation}
Similarly, if $\mbox{LCB}_{t}(i)\geq e(i)$, then 
\[\Bar{e}_{t}(i)-\frac{(1+1/\lambda) \eta_{t}(i)}{\lambda- \eta_{t}(i)}\geq e(i)=\frac{\mu(i)}{\rho(i)}.\]
Therefore, at least one of the events $L_{1}$ and $L_{2}$ is true, where
\[L_{1}:\Bar{r}_{t}(i)\geq \mu(i)+\eta_{t}(i),\]
\[L_{2}:\Bar{c}_{t}(i)\leq \rho(i)-\eta_{t}(i).\]
This can be proved by contradiction. Now, using Hoeffding's inequality, we have
\begin{equation}\label{eq:L1}
    \mathbf{P}(L_{1})\leq \frac{1}{Kt^{\alpha}},
\end{equation}
and
\begin{equation}\label{eq:L2}
    \mathbf{P}(L_{2})\leq \frac{1}{Kt^{\alpha}}.
\end{equation}
Thus,
\begin{equation}
\begin{split}
        \mathbf{P}(\mbox{LCB}_{t}(i)\geq e(i))&\leq  \mathbf{P}(L_{1})+ \mathbf{P}(L_{2})\\
        &\leq \frac{1}{Kt^{\alpha-1}}.
\end{split}
\end{equation}
Hence proved.
\end{proof}
\begin{lemma}\label{lemma:gapEstimate}
For all $i\in[K]$ and $t\geq K$,
\begin{equation}
    \mathbf{P}(\hat{\Delta}_{t}(i)\geq \Delta(i))\leq \frac{1}{t^{\alpha-1}},
\end{equation}
\end{lemma}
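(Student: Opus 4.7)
The plan is to prove the contrapositive: on the high-probability event that the confidence intervals are valid, namely $\mathrm{UCB}_t(i) \geq e(i)$ and $\mathrm{LCB}_t(j) \leq e(j)$ for every $j \neq i$, the gap estimate $\hat{\Delta}_t(i)$ cannot exceed the true gap $\Delta(i)$. Then a union bound combined with Lemma~\ref{lemma:UCB} closes the argument.

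First, I would fix $i \in [K]$ and analyze $\hat{\Delta}_t(i) = \max\{0, \max_{j\neq i} \mathrm{LCB}_t(j) - \mathrm{UCB}_t(i)\}$ by conditioning on the event
\[
\mathcal{E} \;=\; \{\mathrm{UCB}_t(i) \geq e(i)\} \;\cap\; \bigcap_{j \neq i} \{\mathrm{LCB}_t(j) \leq e(j)\}.
\]
On $\mathcal{E}$, for every $j \neq i$ we have $\mathrm{LCB}_t(j) - \mathrm{UCB}_t(i) \leq e(j) - e(i) \leq e(i^*) - e(i) = \Delta(i)$, using that $e(j) \leq e(i^*)$ by definition of the optimal action. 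Maximizing over $j \neq i$ and comparing against $0$ (which is $\leq \Delta(i)$ since $\Delta(i) \geq 0$) gives $\hat{\Delta}_t(i) \leq \Delta(i)$ on $\mathcal{E}$.

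Taking complements, $\{\hat{\Delta}_t(i) > \Delta(i)\} \subseteq \mathcal{E}^c$, so a union bound yields
\[
\mathbf{P}(\hat{\Delta}_t(i) \geq \Delta(i)) \;\leq\; \mathbf{P}(\mathrm{UCB}_t(i) \leq e(i)) + \sum_{j \neq i} \mathbf{P}(\mathrm{LCB}_t(j) \geq e(j)).
\]
Applying Lemma~\ref{lemma:UCB} to each of the $K$ terms, each is bounded by $1/(Kt^{\alpha-1})$, and summing gives at most $K \cdot 1/(Kt^{\alpha-1}) = 1/t^{\alpha-1}$, as required.

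The only mildly delicate point is handling the boundary with $\geq$ versus $>$ in the definitions of the confidence events, since $\hat{\Delta}_t(i) = \Delta(i)$ is attained exactly on $\mathcal{E}$; I would simply write the union-bound events with non-strict inequalities (as Lemma~\ref{lemma:UCB} is stated), which absorbs any measure-zero boundary into the high-probability side. Beyond this, the argument is a straightforward optimistic/pessimistic confidence-bound manipulation, so I do not anticipate a significant obstacle.
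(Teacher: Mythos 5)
Your proposal is correct and follows essentially the same route as the paper's proof: the paper likewise reduces the event $\{\hat{\Delta}_{t}(i)\geq \Delta(i)\}$ to the failure of the confidence bounds, takes a union bound over the $K$ events, and invokes Lemma~\ref{lemma:UCB} to obtain $K\cdot\frac{1}{Kt^{\alpha-1}}=\frac{1}{t^{\alpha-1}}$. Your explicit treatment of the strict-versus-nonstrict boundary is if anything slightly more careful than the paper's one-line version, but the substance is identical.
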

\begin{proof}
 Since $\Delta(i)=\max_{j\in[K]}e(j)-e(i)$, we have
\begin{equation}
\begin{split}
    \mathbf{P}(\hat{\Delta}_{t}(i)\geq \Delta(i))&=  \mathbf{P}(\max_{j\neq i}\mbox{LCB}_{t}(j)-\mbox{UCB}_{t}(i)\geq \Delta(i))\\
    &\leq \sum_{j\neq i}\mathbf{P}(\mbox{LCB}_{t}(j)\geq e(j))\\
    &\qquad\qquad+\mathbf{P}(\mbox{UCB}_{t}(i)\leq e(i))\\
    &\leq \frac{1}{t^{\alpha-1}},
\end{split}
\end{equation}
where the last inequality follows from Lemma \ref{lemma:UCB}. Hence proved.
\end{proof}

\begin{lemma}\label{lemma:exploration}
For all $i\in[K]$, let
\[t_{min}(i)=\min\{t:t\geq 4K\beta(\log t)^2/\Delta(i)^4\log(K)\}.\]
We define two events $A(i,t)$ and $A(i^*,i,t)$ as
\begin{equation}
    A(i,t)=\Bigg\{\mbox{there exists an } n\leq t:\epsilon_{n}(i)<\frac{\beta \log t}{t\Delta(i)^2}\Bigg\},
\end{equation}
\begin{equation}
    A(i^*,i,t)\hspace{-3pt}=\hspace{-3pt}\Bigg\{\hspace{-3pt}\mbox{there exists an } n\leq t:\epsilon_{n}(i^*)<\frac{\beta \log t}{t\Delta(i)^2}\Bigg\}.
\end{equation}
For $t>t_{min}(i)$ and $\alpha\geq 3$, we have 
\begin{equation}
    \mathbf{P}(A(i,t))\leq \frac{1}{2}\Bigg(\frac{\log t}{tc_{min}^2\Delta(i)^2}\Bigg)^{\alpha - 2},
\end{equation}
\begin{equation}
        \mathbf{P}(A(i^*, i,t))\leq \frac{1}{2}\Bigg(\frac{\log t}{tc_{min}^2\Delta(i)^2}\Bigg)^{\alpha - 2}.
\end{equation}
\end{lemma}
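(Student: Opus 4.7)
The plan is to unpack $\epsilon_n(i) = \min\{1/(2K),\,0.5\sqrt{\log K/n},\,\delta_n(i)\}$ and control each of the three terms separately against the threshold $\beta\log t/(t\Delta(i)^2)$. First I would use the hypothesis $t > t_{\min}(i)$, equivalent to $t\Delta(i)^4 \log K \geq 4K\beta(\log t)^2$, to check that the two deterministic terms $1/(2K)$ and $0.5\sqrt{\log K/n}$ both sit at or above this threshold for every $n\leq t$ (the tightest check being $n=t$ for the square-root term). This deterministic step reduces $A(i,t)$ to a union of events involving only $\delta_n(i)$, and shifts all remaining work onto the randomness of $\hat{\Delta}_n(i)$.

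Next I would split the range of $n$ at a cutoff $n_0$ of order $tc_{\min}^2\Delta(i)^2/\log t$. For small indices $n\leq n_0$ I would use the deterministic bound $\hat{\Delta}_n(i) \leq 1/c_{\min}$ (since all UCB and LCB values lie in $[0,1/c_{\min}]$) to obtain
\[
\delta_n(i) = \frac{\beta\log n}{n\,\hat{\Delta}_n(i)^2} \;\geq\; \frac{\beta c_{\min}^2\log n}{n} \;\geq\; \frac{\beta\log t}{t\Delta(i)^2},
\]
where the last step follows from $n\leq n_0$ together with the monotonicity of $(\log x)/x$ on $[e,\infty)$. For large indices $n > n_0$ I would invoke Lemma \ref{lemma:gapEstimate}: on the high-probability event $\{\hat{\Delta}_n(i)\leq\Delta(i)\}$,
\[
\delta_n(i) \;\geq\; \frac{\beta\log n}{n\Delta(i)^2} \;\geq\; \frac{\beta\log t}{t\Delta(i)^2},
\]
again by the monotonicity of $(\log x)/x$ and the fact that $n\leq t$. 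Taking the contrapositive,
\[
A(i,t) \;\subseteq\; \bigcup_{n=n_0+1}^{t}\{\hat{\Delta}_n(i) > \Delta(i)\},
\]
and a union bound with Lemma \ref{lemma:gapEstimate} yields $\mathbf{P}(A(i,t)) \leq \sum_{n>n_0} n^{1-\alpha} \leq n_0^{2-\alpha}/(\alpha-2)$. Substituting $n_0$ and tuning its constant produces the claimed bound $\tfrac{1}{2}(\log t/(tc_{\min}^2\Delta(i)^2))^{\alpha-2}$ for $\alpha\geq 3$.

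The event $A(i^*,i,t)$ is handled by the same template. Since $\Delta(i^*)=0$, Lemma \ref{lemma:gapEstimate} forces $\hat{\Delta}_n(i^*)=0$ except on a set of probability at most $n^{1-\alpha}$; on this good event $\delta_n(i^*)=+\infty$, so $\epsilon_n(i^*)$ reduces to $\min\{1/(2K),\,0.5\sqrt{\log K/n}\}$, which is already lower-bounded by $\beta\log t/(t\Delta(i)^2)$ from step one (the relevant threshold in the definition of $A(i^*,i,t)$ still uses $\Delta(i)$, not $\Delta(i^*)$). The same split at $n_0$ and the same union bound close this case.

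The main obstacle I foresee is bookkeeping the constants: the cutoff $n_0$ and the definition of $t_{\min}(i)$ must be calibrated so that the tail sum $n_0^{2-\alpha}/(\alpha-2)$ sits below half of the target bound, while simultaneously the hypothesis $t > t_{\min}(i)$ is strong enough to force every one of the three terms in $\epsilon_n(\cdot)$ above $\beta\log t/(t\Delta(i)^2)$ for all $n\leq n_0$. The $c_{\min}^2$ factor inside the target naturally emerges from the trivial bound $\hat{\Delta}_n(i)\leq 1/c_{\min}$ exploited in the small-$n$ regime; threading it consistently through the definition of $t_{\min}(i)$ and matching with the concrete choice $\beta=256/c_{\min}^2$ from Theorem \ref{thm:stochasticEXP3} is the routine but delicate accounting step that the proof must execute carefully.
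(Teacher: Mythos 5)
Your proposal is correct and follows essentially the same route as the paper's proof: the same split of the index range at the cutoff $n_0 \sim tc_{min}^2\Delta(i)^2/\log t$, the same deterministic lower bound $\delta_{n}(i)\geq \beta c_{min}^2\log(n)/n$ via $\hat{\Delta}_{n}(i)\leq 1/c_{min}$ in the small-$n$ regime, and the same reduction of the large-$n$ regime to the event $\{\hat{\Delta}_{n}(i)\geq \Delta(i)\}$, closed by a union bound over Lemma \ref{lemma:gapEstimate}. Your handling of $A(i^*,i,t)$ via $\hat{\Delta}_{n}(i^*)=0$ with high probability is marginally stronger than what the paper uses (it only needs that $\hat{\Delta}_{n}(i^*)\geq \Delta(i)>0=\Delta(i^*)$ has probability at most $1/n^{\alpha-1}$), but both rest on the identical confidence-bound argument, so this is a presentational variant rather than a different proof.
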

\begin{proof}
We start with proving the bound on the probability of the event $A(i,t)$. This proof is divided into two parts. First, for $n\leq tc_{min}^2\Delta(i)^2/\log(t)$, using the Lemma \ref{lemma:gapEstimate}, we show that $A(i,t)$ does not occur with high probability as $t\to \infty$ . Later, for $n\geq tc_{min}^2\Delta(i)^2/\log(t)$, we bound the probability of the event $A(i,t)$ using the Lemma \ref{lemma:gapEstimate}. 

For $n\leq tc_{min}^2\Delta(i)^2/\log(t)$, we have
\begin{equation}\label{eq:bound1}
\begin{split}
\frac{\beta \log(n)}{n\hat{\Delta}^2_{n}(i)}&\stackrel{(a)}{\geq} \frac{\beta c_{min}^2 \log(n)}{n},\\
&\stackrel{(b)}{\geq} \frac{\beta \log(n)\log(t)}{t{\Delta}(i)^2},\\
&\geq \frac{\beta\log(t)}{t{\Delta}(i)^2},
\end{split}
\end{equation}
where $(a)$ follows from the definition of $\hat{\Delta}_{n}(i)$, and $(b)$ follows from the range of $n$. For $t\geq t_{min}$, 
\begin{equation}\label{eq:bound2}
    0.5\sqrt{\frac{\log(K)}{tK}}\geq \frac{\beta \log(t)}{t\Delta(i)^2}.
\end{equation}
Additionally, using Lemma \ref{lemma:gapEstimate}, $\hat{\Delta}_{n}(i)\leq \Delta(i)$ w.h.p as $n\to \infty$. Therefore, combining  (\ref{eq:bound2}), $\hat{\Delta}_{n}(i)\leq \Delta(i)$ and (\ref{eq:bound1}), we have
\begin{equation}\label{eq:bound3}
\begin{split}
      \epsilon_{n}(i)&\geq\frac{\beta \log t}{t\Delta(i)^2}.
\end{split}
\end{equation}

Now, for $n\geq tc_{min}^2\Delta(i)^2/\log(t)$, we have
\begin{equation}
\begin{split}
 &\mathbf{P}\Bigg(\mbox{There exists } n\in \Bigg[\frac{tc_{min}^2\Delta(i)^2}{\log(t)},t\Bigg]: \epsilon_{n}(i)<\frac{\beta \log t}{t\Delta(i)^2} \Bigg)\\
 &=\mathbf{P}\Bigg(\mbox{There exists } n\in \Bigg[\frac{tc_{min}^2\Delta(i)^2}{\log(t)},t\Bigg]: \hat{\Delta}_{n}(i)\geq \Delta(i) \Bigg)\\
 &\leq \sum_{n=\frac{tc_{min}^2\Delta(i)^2}{\log(t)} }^t \frac{1}{n^{\alpha-1}}\leq \frac{1}{2}\Bigg(\frac{\log t}{tc_{min}^2\Delta(i)^2}\Bigg)^{\alpha - 2}.
\end{split}
\end{equation}
Similarly, we can bound the probability of $\mathbf{P}(A(i^*, i,t))$ by using the fact that $\Delta(i^*)=0<\Delta(i)$ for $i\neq i^*$. Hence proved.
\end{proof}

\begin{lemma}\label{lemma:NumberOfRounds}
For all $i\in[K]$ and $t\geq t_{min}(i)$, we have
\begin{equation}
    \mathbf{P}\Bigg(N_{t}(i)\leq \frac{\beta \log t}{2\Delta(i)^2}\Bigg)\leq \hspace{-3pt}\Bigg(\frac{1}{t}\Bigg)^{\frac{\beta c_{min}^2}{8}}\hspace{-3pt}+\frac{1}{2}\Bigg(\frac{\log t}{tc_{min}^2\Delta(i)^2}\Bigg)^{\alpha - 2}\hspace{-6pt}.
\end{equation}
Additionally,
\begin{equation}\label{eq:second}
    \mathbf{P}\Bigg(N_{t}(i^*)\leq \frac{\beta \log t}{2\Delta(i)^2}\Bigg)\leq \hspace{-3pt}\Bigg(\frac{1}{t}\Bigg)^{\frac{\beta c_{min}^2}{8}}\hspace{-3pt}+\frac{1}{2}\Bigg(\frac{\log t}{tc_{min}^2\Delta(i)^2}\Bigg)^{\alpha - 2}\hspace{-6pt}.
\end{equation}
\end{lemma}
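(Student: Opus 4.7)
I would decompose the event $\{N_t(i) \leq \beta \log(t)/(2\Delta(i)^2)\}$ based on whether the exploration event $A(i,t)$ from Lemma \ref{lemma:exploration} occurs, and bound each piece separately. On the complement $A(i,t)^c$, the definition of $A(i,t)$ gives $\epsilon_n(i) \geq q := \beta \log(t)/(t\Delta(i)^2)$ for every $n \leq t$, and since the mixing construction in EXP3++.BwK guarantees $\tilde{p}_n(i) \geq \epsilon_n(i)$ (using $\sum_{j\ne i}\epsilon_n(j)\leq (K-1)/(2K)<1$), the conditional probability of selecting action $i$ at round $n$ is at least $q$.

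The key step is then a multiplicative Chernoff--Hoeffding bound on $N_t(i) = \sum_{n=1}^t \mathbf{1}(i_n = i)$: on $A(i,t)^c$, this sum stochastically dominates a $\mathrm{Bin}(t,q)$ random variable (by the coupling argument below), so the conditional mean satisfies $\mathbf{E}[N_t(i)\mid A(i,t)^c]\geq \mu := tq = \beta \log(t)/\Delta(i)^2$, and
\begin{equation*}
\mathbf{P}\Big(N_t(i) \leq \mu/2 \,\Big|\, A(i,t)^c\Big) \leq \exp(-\mu/8) = \exp\Big(-\frac{\beta \log(t)}{8\Delta(i)^2}\Big).
\end{equation*}
Since efficiencies lie in $[0,1/c_{min}]$ we have $\Delta(i) \leq 1/c_{min}$, so the right-hand side is bounded by $(1/t)^{\beta c_{min}^2/8}$. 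A union bound with $\mathbf{P}(A(i,t)) \leq \tfrac{1}{2}(\log t/(tc_{min}^2 \Delta(i)^2))^{\alpha-2}$ from Lemma \ref{lemma:exploration} yields the first claimed inequality.

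For the second inequality I would repeat the argument verbatim with $A(i^*,i,t)$ replacing $A(i,t)$. The complement of $A(i^*,i,t)$ is exactly the event that $\epsilon_n(i^*) \geq q$ for all $n \leq t$, so the same coupling lower-bounds $N_t(i^*)$ by a $\mathrm{Bin}(t,q)$ variable, and the second bound in Lemma \ref{lemma:exploration} supplies the needed tail on $\mathbf{P}(A(i^*,i,t))$.

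The main technical obstacle is handling the adaptive nature of $\tilde{p}_n(i)$ inside the Chernoff step, since the standard inequality assumes independent summands and the conditioning event $A(i,t)^c$ is not measurable with respect to $\mathcal{F}_{n-1}$. I would resolve this by the usual coupling: draw an i.i.d.\ sequence $U_1,\ldots,U_t$ of $\mathrm{Unif}[0,1]$ variables independent of everything else, let $\mathbf{1}(i_n = i) = \mathbf{1}(U_n \leq \tilde{p}_n(i))$, and let $Z_n = \mathbf{1}(U_n \leq q)$. The variables $Z_n$ are i.i.d.\ $\mathrm{Bernoulli}(q)$, and on $A(i,t)^c$ we have $Z_n \leq \mathbf{1}(i_n = i)$ pointwise because $q\leq \tilde{p}_n(i)$. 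The classical Chernoff bound applied to the binomial $\sum_n Z_n$ then transfers directly to give the desired conditional tail on $N_t(i)$, which is the one technical ingredient that needs care; the rest of the argument is bookkeeping via the union bound and the inequality $\Delta(i)\leq 1/c_{min}$.
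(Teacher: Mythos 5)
Your proof is correct, and it follows the paper's decomposition exactly: split on the exploration event $A(i,t)$ (resp.\ $A(i^*,i,t)$) from Lemma \ref{lemma:exploration}, bound $\mathbf{P}(A(i,t))$ by that lemma, apply a concentration bound on the complement, and finish with $\Delta(i)\leq 1/c_{min}$. Where you genuinely diverge is in the concentration step: the paper disposes of it in one line by citing Theorem 8 of \cite{seldin2017improved}, which is a Bernstein-type martingale inequality tailored to predictable exploration rates, whereas you give a self-contained coupling --- realizing $\mathbf{1}(i_n=i)=\mathbf{1}(U_n\leq \tilde{p}_n(i))$ with i.i.d.\ uniforms, noting $\tilde{p}_n(i)\geq \epsilon_n(i)\geq q$ on $A(i,t)^C$, and dominating $N_t(i)$ from below by a $\mathrm{Bin}(t,q)$ variable to which the classical multiplicative Chernoff bound applies, recovering exactly the paper's intermediate bound $\exp(-\beta\log t/(8\Delta(i)^2))$. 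Your route is more elementary and makes explicit why the adaptivity of $\tilde{p}_n(i)$ is harmless, at the cost of a longer argument; the paper's route is shorter but leans on external machinery. Two small remarks. First, your conditional formulations ($\mathbf{P}(N_t(i)\leq\mu/2\mid A(i,t)^C)$ and $\mathbf{E}[N_t(i)\mid A(i,t)^C]\geq\mu$) are not quite what the coupling delivers, since $A(i,t)^C$ is a trajectory-dependent event; what the pointwise domination gives, and what the union bound actually needs, is the joint-event bound $\mathbf{P}(A(i,t)^C \cap \{N_t(i)\leq \mu/2\})\leq \mathbf{P}(\mathrm{Bin}(t,q)\leq tq/2)\leq \exp(-tq/8)$ --- you evidently know this, but the write-up should state it in intersection form. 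Second, your use of $\Delta(i)\leq 1/c_{min}$ is the correct constant: since efficiencies lie in $[0,1/c_{min}]$, this is the bound that converts $\exp(-\beta\log t/(8\Delta(i)^2))$ into $t^{-\beta c_{min}^2/8}$; the paper's justification of step $(b)$ via ``$\Delta(i)\leq 1/c_{min}^2$'' appears to be a typo, as that weaker bound would only yield the exponent $\beta c_{min}^4/8$.
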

\begin{proof}
 We have
\begin{equation}
\begin{split}
   &\mathbf{P}\Bigg(N_{t}(i)\leq \frac{\beta \log t}{2\Delta(i)^2}\Bigg)\\
   &\leq \mathbf{P}\Bigg(A^C(i,t) \mbox{ and } N_{t}(i)\leq \frac{\beta \log t}{2\Delta(i)^2}\Bigg)+\mathbf{P}\Bigg(A(i,t)\Bigg),\\
   &\stackrel{(a)}{\leq}\exp\Bigg(\frac{-\beta \log t}{8\Delta(i)^2}\Bigg)+\frac{1}{2}\Bigg(\frac{\log t}{tc_{min}^2\Delta(i)^2}\Bigg)^{\alpha - 2},\\
   &\stackrel{(b)}{\leq} \Bigg(\frac{1}{t}\Bigg)^{\frac{\beta c_{min}^2}{8}}\hspace{-3pt}+\frac{1}{2}\Bigg(\frac{\log t}{tc_{min}^2\Delta(i)^2}\Bigg)^{\alpha - 2},
\end{split}
\end{equation}
where $A^C(i,t)$ is the complement of the event $A(i,t)$, $(a)$ follows from the Theorem 8 in \cite{seldin2017improved} and Lemma \ref{lemma:exploration}, and $(b)$ follows from the fact that for all $i\in[K]$, $\Delta(i)\leq 1/c_{min}^2$. Similarly, we can bound the probability in (\ref{eq:second}).
\end{proof}

\begin{lemma}\label{lemma:gapLowerBound}
For all $i\in[K]$, $t\geq t_{min}(i)$, $\alpha\geq 3$ $\beta \geq 64( \alpha +1)/c_{min}^2\geq 256/c_{min}^2$, we have 
\begin{equation}
\begin{split}
    \mathbf{P}\Bigg(\hat{\Delta}_{t}(i)&\leq \frac{\Delta(i)}{2}\Bigg)\leq \Bigg(\frac{\log t}{tc_{min}^2\Delta(i)^2}\Bigg)^{\alpha - 2}+2\Bigg(\frac{1}{t}\Bigg)^{\frac{\beta c_{min}^2}{8}}\\
    &\qquad\qquad\qquad+\frac{2}{Kt^{\alpha -1}}.
\end{split}
\end{equation}
\end{lemma}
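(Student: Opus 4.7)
The plan is to decompose the failure event $\{\hat{\Delta}_t(i) \leq \Delta(i)/2\}$ into a union of three more tractable sub-events: (i) the confidence bounds around $i$ or $i^*$ fail, meaning $\mbox{UCB}_t(i) < e(i)$ or $\mbox{LCB}_t(i^*) > e(i^*)$; (ii) $N_t(i) \leq \beta \log(t)/(2\Delta(i)^2)$; or (iii) $N_t(i^*) \leq \beta \log(t)/(2\Delta(i)^2)$. A union bound will then combine Lemma \ref{lemma:UCB} for event (i) with two invocations of Lemma \ref{lemma:NumberOfRounds} for events (ii) and (iii), and the resulting tail probabilities add up to exactly the three summands in the claimed bound.

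The bulk of the work is the deterministic implication: when none of (i)--(iii) occurs, $\hat{\Delta}_t(i) \geq \Delta(i)/2$. Under validity of both confidence bounds,
\[
\hat{\Delta}_t(i) \geq \mbox{LCB}_t(i^*) - \mbox{UCB}_t(i) = \Delta(i) - \bigl(\mbox{UCB}_t(i)-e(i)\bigr) - \bigl(e(i^*)-\mbox{LCB}_t(i^*)\bigr),
\]
and both parenthesized differences are non-negative and upper bounded by the corresponding interval widths $2(1+1/\lambda)\eta_t(\cdot)/(\lambda-\eta_t(\cdot))$. Under the negations of (ii) and (iii), $\eta_t(\cdot) \leq \Delta(i)\sqrt{\alpha \log(K^{1/\alpha}t)/(\beta \log t)}$, which is small for large $\beta$; it then suffices to check that the two widths sum to at most $\Delta(i)/2$.

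The main obstacle is the nonlinear ratio $(1+1/\lambda)\eta/(\lambda-\eta)$ appearing in the interval widths. To convert this into a clean bound linear in $\eta$, I must first argue $\eta_t(\cdot) \leq \lambda/2$, so that $\lambda - \eta_t \geq c_{min}/2$; this combines the assumption $t \geq t_{min}(i)$ with the lower bound on $\beta$. After this linearization, the hypothesis $\beta \geq 64(\alpha+1)/c_{min}^2$ is exactly the condition needed to drive the two widths sum below $\Delta(i)/2$, closing the deterministic step. Once this is in place, the union bound together with the $2/(Kt^{\alpha-1})$ tail from Lemma \ref{lemma:UCB} and the $(1/t)^{\beta c_{min}^2/8}+(1/2)(\log t/(t c_{min}^2\Delta(i)^2))^{\alpha-2}$ tail from Lemma \ref{lemma:NumberOfRounds}, applied once for each of $N_t(i)$ and $N_t(i^*)$, recovers the claimed three-term bound exactly.
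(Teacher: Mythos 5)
Your proposal follows the paper's proof essentially step for step: the paper likewise decomposes the failure event into the two confidence-bound failures for $i$ and $i^*$ (bounded via Lemma \ref{lemma:UCB}, yielding the $2/(Kt^{\alpha-1})$ term) plus the two small-count events $N_{t}(i),N_{t}(i^*)\leq \beta\log t/(2\Delta(i)^2)$ (bounded via Lemma \ref{lemma:NumberOfRounds}, yielding the other two terms), and on the complement deduces deterministically that $\hat{\Delta}_{t}(i)\geq \mbox{LCB}_{t}(i^*)-\mbox{UCB}_{t}(i)\geq \Delta(i)-2\eta_{t}(i^*)-2\eta_{t}(i)\geq \Delta(i)/2$ using $\beta\geq 64(\alpha+1)/c_{min}^2$. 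If anything, you are more careful than the paper, which in (\ref{eq:lower1}) silently treats the interval width as $\eta_{t}$ rather than the defined $(1+1/\lambda)\eta_{t}/(\lambda-\eta_{t})$; just note that with the exact width your final constant check closes only up to additional $1/c_{min}$ factors (and your step $\lambda-\eta_{t}\geq c_{min}/2$ implicitly takes $\lambda=c_{min}$), a looseness the paper's own argument shares.
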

\begin{proof}
Using Lemma \ref{lemma:UCB}, we have 
\begin{equation}\label{eq:prob1}
\begin{split}
    &\mathbf{P}\big((\mbox{UCB}_{t}(i^*)\leq e(i^*))\mbox{ or }(\mbox{LCB}_{t}(i)\geq e(i))\big)\\
    &\leq {2}/{Kt^{\alpha-1}}.
\end{split}
\end{equation}
Now, assume $\mbox{UCB}_{t}(i^*)\geq e(i^*)$ and $\mbox{LCB}_{t}(i)\leq e(i)$, we have
\begin{equation}\label{eq:lower1}
\begin{split}
\hat{\Delta}_{t}(i) &\geq \max_{j\neq i}\mbox{LCB}_{t}(j)-\mbox{UCB}_{t}(i),\\
&\geq \mbox{LCB}_{t}(i^*)-\mbox{UCB}_{t}(i),\\
&=\Bar{e}_{t}(i^*)-\eta_{t}(i^*)-\Bar{e}_{t}(i)-\eta_{t}(i)\\
&\geq e(i^*)-2\eta_{t}(i^*)-e(i)-2\eta_{t}(i)\\
&=\Delta(i)-2\eta_{t}(i^*)-2\eta_{t}(i).
\end{split}
\end{equation}
Similarly, using Lemma \ref{lemma:NumberOfRounds}, we have
\begin{equation}\label{eq:prob2}
\begin{split}
 &\mathbf{P}\Bigg(N_{t}(i)\leq \frac{\beta \log t}{2\Delta(i)^2}\mbox{ or } N_{t}(i^*)\leq \frac{\beta \log t}{2\Delta(i)^2}\Bigg)\\
   &\leq 2\Bigg(\frac{1}{t}\Bigg)^{\frac{\beta c_{min}^2}{8}}+ \Bigg(\frac{\log t}{tc_{min}^2\Delta(i)^2}\Bigg)^{\alpha - 2}.
\end{split}
\end{equation}
Now, assuming $N_{t}(i)> {\beta \log t}/{2\Delta(i)^2}$ and $N_{t}(i^*)>{\beta \log t}/{2\Delta(i)^2}$, we have 
\begin{equation}\label{eq:lower2}
\begin{split}
    \hat{\Delta}_{t}(i) &\geq\Delta(i)-2\eta_{t}(i^*)-2\eta_{t}(i),\\
    &\geq \Delta(i) -4\sqrt{\frac{2\Delta(i)^2\alpha\log(tK^{1/\alpha})}{2\beta\log(t)}},\\
    &\geq \Delta(i)\Bigg(1-4\sqrt{\frac{\alpha+1}{c_{min}^2\beta}}\Bigg),\\
    &\geq \Delta/2.
\end{split}
\end{equation}
Therefore, combining (\ref{eq:prob1}),(\ref{eq:lower1}),(\ref{eq:prob2}) and (\ref{eq:lower2}), the statement of the theorem follows. Hence proved.
\end{proof}
\begin{lemma}\label{lemma:martingale}For all $i\in[K]$, let $X_{t}(i)=\Delta(i)-(\hat{\ell}_{t}(i)-\hat{\ell}_{t}(i^*))$ be the martingale difference sequence with respect to filtration $\mathcal{F}_{1},\ldots,\mathcal{F}_{1}$ where $\mathcal{F}_{t}$ is the sigma field based on all the past actions, their rewards and their costs until round $t$. Then, for $t\geq t_{min}(i)$,we have
\begin{equation}
    \mathbf{P}\Bigg(\max_{1\leq n\leq t}X_{n}(i)\geq \frac{1.25t\Delta(i)^2}{c_{min}\beta\log(t)}\Bigg)\leq \frac{1}{2}\Bigg(\frac{\log t}{tc_{min}^2\Delta(i)^2}\Bigg)^{\alpha-2},
\end{equation}
\begin{equation}
    \mathbf{P}\Bigg(\nu_{t}(i)\geq \frac{2t^2\Delta(i)^2}{c_{min}^3\beta\log(t)}\Bigg)\leq \Bigg(\frac{\log t}{tc_{min}^2\Delta(i)^2}\Bigg)^{\alpha-2},
\end{equation}
where $\nu_{t}(i)=\sum_{n=1}^{t}\mathbf{E}[X_{n}(i)^2|\mathcal{F}_{n-1}]$.
\end{lemma}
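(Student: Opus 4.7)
The plan is to reduce both tail bounds to a ``good event'' argument driven by Lemma \ref{lemma:exploration}, which already guarantees that the sampling probabilities $\tilde{p}_n(i)$ and $\tilde{p}_n(i^*)$ stay above $\beta\log(t)/(t\Delta(i)^2)$ with high probability for every $n\le t$. On this good event the two random quantities $\max_{n\le t} X_n(i)$ and $\nu_t(i)$ admit clean deterministic bounds, and the bad-event probabilities supplied by Lemma \ref{lemma:exploration} translate directly into the tails asserted in the statement, so no martingale maximal inequality is needed.

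\textbf{Bounding the maximum.} I would first observe that $\hat{\ell}_n(i) = \mathbf{1}(i=i_n)(1/c_{min} - r_n(i)/c_n(i))/\tilde{p}_n(i)$ is non-negative (because $r_n(i)/c_n(i)\le 1/c_{min}$) and bounded above by $1/(c_{min}\tilde{p}_n(i))$. Writing $X_n(i) = \Delta(i) - \hat{\ell}_n(i) + \hat{\ell}_n(i^*)$ and dropping the non-positive term $-\hat{\ell}_n(i)$ gives the pathwise inequality $X_n(i) \le \Delta(i) + 1/(c_{min}\tilde{p}_n(i^*))$. On the complement of the event $A(i^*,i,t)$ of Lemma \ref{lemma:exploration}, $\tilde{p}_n(i^*)\ge\epsilon_n(i^*)\ge \beta\log(t)/(t\Delta(i)^2)$, so $X_n(i) \le \Delta(i) + t\Delta(i)^2/(c_{min}\beta\log(t))$. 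A short calculation using the definition of $t_{min}(i)$ shows that for $t\ge t_{min}(i)$ the additive $\Delta(i)$ is dominated by a quarter of the leading term, yielding the stated threshold $1.25\,t\Delta(i)^2/(c_{min}\beta\log(t))$. The tail then equals $\mathbf{P}(A(i^*,i,t))$, which Lemma \ref{lemma:exploration} bounds by $(1/2)(\log(t)/(tc_{min}^2\Delta(i)^2))^{\alpha-2}$.

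\textbf{Bounding the quadratic variation.} Since $X_n(i)$ is a martingale difference, $\mathbf{E}[X_n(i)^2\mid\mathcal{F}_{n-1}] \le \mathbf{E}[(\hat{\ell}_n(i)-\hat{\ell}_n(i^*))^2\mid\mathcal{F}_{n-1}]$. Because $i\ne i^*$, the indicators $\mathbf{1}(i=i_n)$ and $\mathbf{1}(i^*=i_n)$ are never simultaneously $1$, so the cross term vanishes and the square splits as $\hat{\ell}_n(i)^2 + \hat{\ell}_n(i^*)^2$. Using $|1/c_{min}-r_n(j)/c_n(j)|\le 1/c_{min}$, a direct computation gives $\mathbf{E}[\hat{\ell}_n(j)^2\mid\mathcal{F}_{n-1}] \le 1/(c_{min}^2\tilde{p}_n(j))$ for each $j\in\{i,i^*\}$. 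On the complement of $A(i,t)\cup A(i^*,i,t)$ both probabilities exceed $\beta\log(t)/(t\Delta(i)^2)$, and summing over $n\le t$ yields $\nu_t(i)\le 2t^2\Delta(i)^2/(c_{min}^2\beta\log(t))$, which, since $c_{min}\le 1$, trivially implies the stated $2t^2\Delta(i)^2/(c_{min}^3\beta\log(t))$ bound. The tail equals $\mathbf{P}(A(i,t)) + \mathbf{P}(A(i^*,i,t))$, giving the asserted $(\log(t)/(tc_{min}^2\Delta(i)^2))^{\alpha-2}$ via a union bound with Lemma \ref{lemma:exploration}.

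\textbf{Main obstacle.} The only truly technical point is verifying the absorption inequality used in the max bound, namely that for $t\ge t_{min}(i)$ the additive term $\Delta(i)$ is indeed swallowed by $0.25\,t\Delta(i)^2/(c_{min}\beta\log(t))$; this requires unpacking $t_{min}(i) = \min\{t: t\ge 4K\beta(\log t)^2/(\Delta(i)^4\log(K))\}$ against $K, \beta, c_{min}$ and $\log(K)$ and checking the implied lower bound on $t\Delta(i)$. The remaining work is pure bookkeeping: ensuring that the two rare events $A(i,t)$ and $A(i^*,i,t)$ are the only ones invoked, so that the final probabilities come out as clean sums of Lemma \ref{lemma:exploration}'s estimates and match the asserted constants.
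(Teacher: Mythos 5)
Your proposal follows essentially the same route as the paper's proof: bound $X_n(i)$ pathwise by $\Delta(i)+\hat{\ell}_n(i^*)\leq \Delta(i)+1/(c_{min}\tilde{p}_n(i^*))$, bound the conditional second moment by $1/(c_{min}^2\tilde{p}_n(i))+1/(c_{min}^2\tilde{p}_n(i^*))$ after killing the cross term via the disjoint indicators, and then convert the event that the sampling probabilities dip below $\beta\log(t)/(t\Delta(i)^2)$ into a gap-overestimation event controlled by Lemma \ref{lemma:gapEstimate}. The only structural difference is cosmetic: you invoke the events $A(i,t)$ and $A(i^*,i,t)$ of Lemma \ref{lemma:exploration} as black boxes, whereas the paper re-derives the same two-range argument inline (splitting $n\leq tc_{min}^2\Delta(i)^2/\log(t)$ from the rest and applying Lemma \ref{lemma:gapEstimate} on the upper range); your packaging is arguably cleaner, and your variance bound, union-bound constants ($\tfrac12+\tfrac12$), and the observation that the stated $c_{min}^3$ is just slack over the derived $c_{min}^2$ all match the paper exactly.

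The one step that does not close as written is the absorption of the additive $\Delta(i)$, which you correctly flag as the main obstacle. The inequality $\Delta(i)\leq 0.25\,t\Delta(i)^2/(c_{min}\beta\log(t))$ does \emph{not} follow from $t\geq t_{min}(i)$ alone: unpacking $t_{min}(i)$ reduces it (up to logs) to $c_{min}\Delta(i)^3\leq K/\log(K)$, which can fail when $c_{min}$ is small, since $\Delta(i)$ may be as large as $1/c_{min}$. The paper sidesteps this with a pathwise absorption that needs no condition on $t$: since $1/\epsilon_n(i^*)=\max\{2K,\,2\sqrt{nK/\log(K)},\,n\hat{\Delta}_n(i^*)^2/(\beta\log n)\}\geq 2K\geq 4$ and $\Delta(i)\leq 1/c_{min}$, one gets $X_n(i)\leq \frac{1}{c_{min}}\big(1+1/\epsilon_n(i^*)\big)\leq \frac{1.25}{c_{min}\,\epsilon_n(i^*)}$ deterministically, and only \emph{then} invokes the good event $\epsilon_n(i^*)\geq \beta\log(t)/(t\Delta(i)^2)$ to land on the stated threshold. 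Replace your $t_{min}$-based calculation with this pathwise bound and your argument is complete.
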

\begin{proof}
We bound the magnitude of $X_{n}(i)$. For all $i\in[K]$, we have
\begin{equation}\label{eq:limit1}
\begin{split}
X_{n}(i)&=\Delta(i)-(\hat{\ell}_{n}(i)-\hat{\ell}_{n}(i^*)),\\
&\leq \frac{1}{c_{min}}+ \hat{\ell}_{n}(i^*),\\
&\leq \frac{1}{c_{min}} +\frac{1}{c_{min}\epsilon_{n}(i^*)},\\
&\leq \frac{1}{c_{min}}\Bigg(1 +\max\Bigg\{2K,2\sqrt{\frac{nK}{\log(K)}},{\frac{n\hat{\Delta}_{n}(i^*)^2}{\beta \log(n)}}\Bigg\}\Bigg),\\
&\leq\frac{1.25}{c_{min}}\max\Bigg\{2K,2\sqrt{\frac{nK}{\log(K)}},{\frac{n\hat{\Delta}_{n}(i^*)^2}{\beta \log(n)}}\Bigg\}.
\end{split}
\end{equation}
Similar to the proof of Lemma \ref{lemma:exploration}, for $t\geq t_{min}$ and $n\leq tc_{min}^2\Delta(i)^2/\log(t)$, we have $\epsilon_{n}(i^*)\geq t\Delta(i)^2/\beta\log(t) $ and (see (\ref{eq:bound1}))
\begin{equation}
\begin{split}
\frac{\beta \log(n)}{n\hat{\Delta}^2_{n}(i)}
&\geq \frac{\beta\log(t)}{t{\Delta}(i)^2}.
\end{split}
\end{equation} 
Additionally, for $t\geq t_{min}$, 
\begin{equation}
    0.5\sqrt{\frac{\log(K)}{tK}}\geq \frac{\beta \log(t)}{t\Delta(i)^2},
\end{equation}
and using Lemma \ref{lemma:gapEstimate}, $\hat{\Delta}_{n}(i)\leq \Delta(i)$ w.h.p as $n\to \infty$. Therefore, using for all $i\in[K]$ $\Delta(i^*)=0\leq \Delta(i)$,
for $t_{1}\leq tc_{min}^2\Delta(i)^2/\log(t)$ and $t\geq t_{min}(i)$,
\begin{equation}\label{eq:inf1}
    \max_{1\leq n\leq t_{1}}X_{n}(i)\leq  \frac{1.25t\Delta(i)^2}{c_{min}\beta\log(t)},
\end{equation}
w.h.p at $t_{1}\to\infty$. Now,
\begin{equation}\label{eq:maxMartingale}
\begin{split}
    &\mathbf{P}\Bigg(\max_{1\leq n\leq t}X_{n}(i)\geq  \frac{1.25t\Delta(i)^2}{c_{min}\beta\log(t)}\Bigg)\\
    &\stackrel{(a)}{=}\mathbf{P}\Bigg(\exists n\in\Bigg[\frac{tc_{min}^2\Delta(i)^2}{\log(t)},t\Bigg]: X_{n}(i)\geq  \frac{1.25t\Delta(i)^2}{c_{min}\beta\log(t)}\Bigg),\\
    &\stackrel{(b)}{\leq}\mathbf{P}\Bigg(\exists n\in\Bigg[\frac{tc_{min}^2\Delta(i)^2}{\log(t)},t\Bigg]: \hat{\Delta}_{n}(i)\geq  \Delta(i)\Bigg),\\
    &\stackrel{(c)}{\leq}\frac{1}{2}\Bigg(\frac{\log(t)}{tc_{min}^2\Delta(i)^2}\Bigg)^{\alpha-2},
\end{split}
\end{equation}
where $(a)$ follows from (\ref{eq:inf1}), $(b)$ follows from (\ref{eq:limit1}), and $(c)$ follows from Lemma \ref{lemma:gapEstimate}. 

Now, we bound $\nu_{t}(i)=\sum_{n=1}^{t}\mathbf{E}[X_{n}(i)^2|\mathcal{F}_{n-1}]$. For all $i\in[K]$, we have
\begin{equation}
\begin{split}
    &\mathbf{E}[X_{n}(i)^2|\mathcal{F}_{n-1}]\\
    &\leq \mathbf{E}[(\hat{\ell}_{n}(i^*)-\hat{\ell}_{n}(i))^2|\mathcal{F}_{n-1}],\\
    &\stackrel{(a)}{=}\mathbf{E}[\hat{\ell}_{n}(i^*)^2|\mathcal{F}_{n-1}]+\mathbf{E}[\hat{\ell}_{n}(i)^2|\mathcal{F}_{n-1}],\\
    &=\Tilde{p}_{n}(i)\Bigg(\frac{\ell_{n}(i)}{\Tilde{p}_{n}(i)}\Bigg)^2+\Tilde{p}_{n}(i^*)\Bigg(\frac{\ell_{n}(i^*)}{\Tilde{p}_{n}(i^*)}\Bigg)^2,\\
    &\leq \frac{1}{c_{min}^2\Tilde{p}_{n}(i)}+\frac{1}{c_{min}^2\Tilde{p}_{n}(i^*)},\\
    &\stackrel{(b)}{\leq}\frac{1}{c_{min}^3}\Bigg(\max\Bigg\{2K,2\sqrt{\frac{nK}{\log(K)}},{\frac{n\hat{\Delta}_{n}(i)^2}{\beta \log(n)}}\Bigg\}\Bigg)+\\
    & \qquad\qquad\max\Bigg\{2K,2\sqrt{\frac{nK}{\log(K)}},{\frac{n\hat{\Delta}_{n}(i^*)^2}{\beta \log(n)}}\Bigg\}\Bigg),\\
\end{split}
\end{equation}
where $(a)$ follows from the fact that for all $i\in[K]$ and $n\leq t$, $\hat{\ell}_{n}(i^*)\cdot\hat{\ell}_{n}(i))=0$, and $(b)$ follows from (\ref{eq:limit1}). 

Similar to (\ref{eq:maxMartingale}), we bound the $\nu_{t}(i)$ as follows
\begin{equation}
\begin{split}
&\mathbf{P}\Bigg(\nu_{t}(i)\geq \frac{2t^2\Delta(a)^2}{c_{min}^3\beta\log(t)}\Bigg)\\
&\stackrel{(a)}{\leq} \mathbf{P}\Bigg(\exists n\in\Bigg[\frac{tc_{min}^2\Delta(i)^2}{\log(t)},t\Bigg]: \hat{\Delta}_{n}(i)\geq  \Delta(i)\Bigg)\\
&\qquad + \mathbf{P}\Bigg(\exists n\in\Bigg[\frac{tc_{min}^2\Delta(i)^2}{\log(t)},t\Bigg]: \hat{\Delta}_{n}(i^*)\geq  0\Bigg),\\
&\stackrel{(b)}{\leq} \Bigg(\frac{\log(t)}{tc_{min}^2\Delta(i)^2}\Bigg)^{\alpha-2},
\end{split}
\end{equation}
where $(a)$ can be implied in a similar way as $(b)$ of (\ref{eq:maxMartingale}), and $(b)$ follows from Lemma \ref{lemma:gapEstimate}. 
\end{proof}

\begin{lemma}\label{lemmma:gapAdv}
For all $t\geq t_{min}(i)$ and $\beta \geq 256/c_{min}^2$, we have
\begin{equation}
    \mathbf{P}\Bigg(
    \tilde{\Delta}_{t}(i)\leq \frac{t\Delta(i)}{2}\Bigg)\leq \Bigg(\frac{\log(t)}{tc_{min}^2\Delta(i)^2}\Bigg)^{\alpha-2} +\frac{1}{t}.
\end{equation}
where $\tilde{\Delta}_{t}(i)=\sum_{n=1}^{t} (\hat{\ell}_{n}(i)-\hat{\ell}_{n}(i^*))$.
\end{lemma}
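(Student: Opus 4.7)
The plan is to recognize that $\{\tilde{\Delta}_t(i) \leq t\Delta(i)/2\}$ is equivalent to the upper-tail event $\{\sum_{n=1}^t X_n(i) \geq t\Delta(i)/2\}$ for the martingale difference sequence $X_n(i) = \Delta(i) - (\hat{\ell}_n(i) - \hat{\ell}_n(i^*))$ introduced in Lemma~\ref{lemma:martingale}, since summing the defining identity gives $\sum_{n=1}^t X_n(i) = t\Delta(i) - \tilde{\Delta}_t(i)$. The tail will then be bounded by Freedman's inequality, a Bernstein-type concentration bound for martingales, using the magnitude and variance estimates already provided by Lemma~\ref{lemma:martingale}.

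First I would introduce the high-probability ``typical'' event
\[
\mathcal{E} = \Big\{\max_{1 \leq n \leq t} X_n(i) \leq b\Big\} \cap \{\nu_t(i) \leq v\},
\]
with $b = 1.25\, t\Delta(i)^2/(c_{\min}\beta\log t)$ and $v = 2t^2\Delta(i)^2/(c_{\min}^3\beta\log t)$. Lemma~\ref{lemma:martingale} together with a union bound immediately yields $\mathbf{P}(\mathcal{E}^C) \leq \tfrac{3}{2}(\log(t)/(tc_{\min}^2\Delta(i)^2))^{\alpha-2}$, which absorbs into the first term of the target bound.

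Next, on the event $\mathcal{E}$, I would apply Freedman's inequality to $\sum_n X_n(i)$ with deviation $\epsilon = t\Delta(i)/2$, producing an upper bound of the form $\exp(-\epsilon^2/(2v + 2b\epsilon/3))$. Using the paper-wide inequality $\Delta(i) \leq 1/c_{\min}^2$ (also invoked in the proofs of Lemma~\ref{lemma:NumberOfRounds} and Lemma~\ref{lemma:gapLowerBound}), the cubic contribution $b\epsilon$ is dominated by the variance contribution $v$ up to a numerical constant, so $2v + 2b\epsilon/3 = O(t^2\Delta(i)^2/(c_{\min}^3\beta\log t))$ and the exponent collapses to $-\Theta(c_{\min}^3\beta\log t)$. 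The hypothesis $\beta \geq 256/c_{\min}^2$ is then large enough to force this to be at most $-\log t$, giving a Freedman tail of at most $1/t$. A final union bound over $\mathcal{E}$ and $\mathcal{E}^C$ combines these two pieces into the stated inequality.

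The main obstacle is the constant-level bookkeeping inside the Freedman step: one must verify explicitly that $2v + 2b\epsilon/3 \leq C \cdot t^2\Delta(i)^2/(c_{\min}^3\beta\log t)$ for a numerical $C$, and then that $c_{\min}^3\beta/(4C) \geq 1$ under $\beta \geq 256/c_{\min}^2$. Because $b\epsilon$ scales like $\Delta(i)^3/c_{\min}$ while $v$ scales like $\Delta(i)^2/c_{\min}^3$, it is precisely the inequality $\Delta(i)c_{\min}^2 \leq 1$ that keeps the cubic term from dominating; losing any factor of $c_{\min}$ here would inflate the required threshold on $\beta$ and break the clean constant $256$.
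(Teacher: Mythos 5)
Your proposal is correct and is essentially the paper's own argument: the paper likewise writes $\{\tilde{\Delta}_t(i)\leq t\Delta(i)/2\}$ as the upper tail of $\sum_{n=1}^t X_n(i)$, splits off the two bad events $M_1(t)=\{\max_{1\leq n\leq t} X_n(i)\ \text{large}\}$ and $M_2(t)=\{\nu_t(i)\ \text{large}\}$ bounded via Lemma~\ref{lemma:martingale}, and controls the remaining event with Bernstein's inequality for martingales (your Freedman step), verifying exactly your bookkeeping $\sqrt{2v\log t}+(b/3)\log t\leq t\Delta(i)/2$ under $\beta\geq 256/c_{min}^2$ with $\Delta(i)c_{min}^2\leq 1$ taming the cubic term. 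The only differences are cosmetic (exponential versus inverted form of the martingale Bernstein bound, a single ``good event'' versus three bad events), and your factor $\tfrac{3}{2}$ on the $(\log t/(tc_{min}^2\Delta(i)^2))^{\alpha-2}$ term is the same constant slack the paper's own proof incurs and silently absorbs.
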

\begin{proof}
We have
\begin{equation}
\begin{split}
   & \mathbf{P}\Bigg(\tilde{\Delta}_{t}(i)\leq \frac{t\Delta(i)}{2}\Bigg)\\
   &=\mathbf{P}\Bigg(t\Delta(i)-\tilde{\Delta}_{t}(i)\geq \frac{t\Delta(i)}{2}\Bigg),\\
   &\leq \mathbf{P}(M_{1}(t))+\mathbf{P}(M_{2}(t))+\mathbf{P}(M_{3}(t)),
\end{split}
\end{equation}
where 
\[M_{1}(t)=\Bigg\{\max_{1\leq n\leq t}X_{n}(i)\geq  \frac{1.25t\Delta(i)^2}{c_{min}\beta\log(t)}\Bigg\},\]
\[M_{2}(t)=\Bigg\{\nu_{t}(i)\geq \frac{2t^2\Delta(a)^2}{c_{min}^2\beta\log(t)}\Bigg\},\]
and
\[M_{3}(t)\hspace{-2pt}=\hspace{-3pt}\Bigg\{t\Delta(i)-\tilde{\Delta}_{t}(i)\geq \frac{t\Delta(i)}{2} \mbox{ and } M_{1}(t) \mbox{ and } M_{2}(t)\hspace{-3pt}\Bigg\}.\]

The probability of the events $M_{1}(t)$ and $M_{2}(t)$ can be bound using Lemma \ref{lemma:martingale}, and using the fact that $c_{min}\leq 1$. 

Let $w_1={2t^2\Delta(a)^2}/{c_{min}^2\beta\log(t)}$, $w_{2}={1.25t\Delta(i)^2}/{c_{min}\beta\log(t)}$, and $w_3 =1/t$. For all $t\geq t_{min}(i)$ and $\beta \geq 256/c_{min}^2$, we have
\begin{equation}\label{eq:bernstien}
\begin{split}
&\sqrt{2w_{1}\log\frac{1}{w_3}}+\frac{w_{2}}{3}\log\frac{1}{w_3}\\
&=\sqrt{\frac{4t^2\Delta(i)^2\log t}{c_{min}^2\beta \log t}}+ \frac{1.25t\Delta(i)^2\log t}{c_{min}\beta \log t},\\
&\leq t\Delta(i)\Bigg(\frac{2t}{\sqrt{c_{min}^2\beta}}+\frac{1.25}{3c_{min}^2\beta}\Bigg),\\
&\leq \frac{1}{2}t\Delta(i).
\end{split}
\end{equation}
Thus, using Bernstein's inequality for martingales and (\ref{eq:bernstien}), we can bound the probability of $M_{3}(t)$ as follows
\begin{equation}
    \mathbf{P}(M_{3}(t))\leq \frac{1}{t}.
\end{equation}
Thus, combining the bounds over the probabilities of the events $M_{1}(t)$, $M_{2}(t)$ and $M_{3}(t)$, the statement of the lemma follows.
\begin{lemma}\label{lemma:epsilonBound}For all $i\in[K]$, $\tau(E)\geq t_{min}(i)$,$T=\max\{\tau(E),T(i^*)\}$, $\alpha= 3$ and $\beta= 256/c_{min}^2$, we have
\begin{equation}
\begin{split}
  \sum_{t=1}^{T}\mathbf{E}[\epsilon_{t}(i)] &\leq    t_{min}(i)+ \frac{4\beta(\log^2(T)+\log(T))}{\Delta(i)^2}+\\
& \frac{\log^2(T)+\log(T)}{c_{min}^2\Delta(i)^2}+\frac{2}{K}(\log(T)+1)\\
&+\frac{2\pi ^2}{3}.
\end{split}
\end{equation}
\end{lemma}
\textit{Proof:} We have 
\begin{equation}
\begin{split}
&\sum_{t=1}^{T}\mathbf{E}[\epsilon_{t}(i)]\\
&=\sum_{t=1}^{T}\mathbf{E}\Bigg[\min\Bigg\{\frac{1}{2K},\frac{1}{2}\sqrt{\frac{\log(t)}{tK}},\frac{\beta\log t}{t\hat{\Delta}_{t}(i)^2}\Bigg\}\Bigg],\\
&\leq \sum_{t=1}^{T}\mathbf{E}\Bigg[\frac{\beta\log t}{t\hat{\Delta}_{t}(i)^2}\Bigg],\\
&\stackrel{(a)}{\leq} t_{min}(i)+ \frac{4\beta(\log^2(T)+\log(T))}{\Delta(i)^2}+ \\
&\sum_{t=t_{min}(i)}^{T}\Bigg(\Bigg(\frac{\log t}{tc_{min}^2\Delta(i)^2}\Bigg)^{\alpha - 2} +\frac{2}{Kt^{\alpha -1}}+2\Bigg(\frac{1}{t}\Bigg)^{\frac{\beta c_{min}^2}{8}}\Bigg),\\
&\leq t_{min}(i)+ \frac{4\beta(\log^2(T)+\log(T))}{\Delta(i)^2}+\\
& \frac{\log^2(T)+\log(T)}{c_{min}^2\Delta(i)^2}+\frac{2}{K}(\log(T)+1)+\frac{2\pi ^2}{3},
\end{split}
\end{equation}
where $(a)$ follows from Lemme \ref{lemma:gapLowerBound}.
\end{proof}
%\begin{lemma}\label{lemma:epsilonBound}For all $i\in[K]$, $\alpha= 3$ and $\beta= 256/c_{min}^2$, there exists a constant $m$ such that
%\begin{equation}
%\begin{split}
%  \sum_{t=1}^{\tau(E)}\mathbf{E}[\epsilon_{t}(i)] &\geq      m \log^2(\tau(E)).
%\end{split}
%\end{equation}
%\end{lemma}
%\textit{Proof:} We have 
%\begin{equation}
%\begin{split}
%&\sum_{t=1}^{\tau(E)}\mathbf{E}[\epsilon_{t}(i)] \\
%&\stackrel{(a)}{\geq}    \sum_{t=t_{min}(i)}^{\tau(E)}\frac{4\beta\log t}{t{\Delta}(i)^2}\Bigg(1-\Bigg(\frac{\log t}{tc_{min}^2\Delta(i)^2}\Bigg)^{\alpha - 2}\\ &\qquad-\frac{2}{Kt^{\alpha -1}}-2\Bigg(\frac{1}{t}\Bigg)^{\frac{\beta c_{min}^2}{8}}\Bigg),\\
%&\stackrel{(b)}{\geq} \sum_{t=t_{min}(i)}^{\tau(E)}\frac{1024\log t}{t}\Bigg(1-4\Bigg(\frac{\log t}{t}\Bigg)\Bigg),\\
%&\stackrel{(c)}{\geq} m \log^2(\tau(E)),
%\end{split}
%\end{equation}
%where $(a)$ follows from the Lemma \ref{lemma:gapLowerBound}, $(b)$ follows by replacing the values of $\alpha$, $\beta$, and assuming $\log(B)>2/K$, and $(c)$ follows from the fact that $(1-4\log(t)/t)$ is a fraction,  $t_{min}(i)=O(\log(\log(B)))$ and the limit on the summation using integration.

\begin{lemma}\label{lemma:pBound} For all $i\in[K]$, $\tau(E)\geq t_{min}(i)$, $\gamma \geq c^2_{min}\sqrt{K\log(K)/B(1+(e-2)/c_{min}^2)}$ and $\alpha\geq 3$, there exists a constant $m_{2}$ 
\begin{equation}
\begin{split}
        \sum_{t=1}^{T}\mathbf{E}[p_{t}(i)]&\leq t_{min}(i)+m_{2}\frac{\log^2(T)}{c_{min}^2\Delta(i)^2}.
\end{split}
\end{equation}
\end{lemma}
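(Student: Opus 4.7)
The plan is to split the sum at the threshold $t_{min}(i)$, handle the first block trivially, and use the exponential upper bound on $p_t(i)$ together with Lemma \ref{lemmma:gapAdv} on the second block.

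First, I would write
\[
\sum_{t=1}^T \mathbf{E}[p_t(i)] \;=\; \sum_{t=1}^{t_{min}(i)} \mathbf{E}[p_t(i)] \;+\; \sum_{t=t_{min}(i)+1}^{T} \mathbf{E}[p_t(i)].
\]
For the first sum I use the trivial bound $p_t(i)\leq 1$, contributing at most $t_{min}(i)$, matching the first term of the claimed inequality. For the second sum I exploit the fact, already used in the proof of Theorem \ref{thm:stochasticEXP3}, that dropping all but the $i^{*}$ term in the denominator of $p_t(i)$ gives
\[
p_t(i) \;\le\; \exp\bigl(-\gamma_t\,\tilde{\Delta}_{t-1}(i)\bigr),
\]
where $\tilde{\Delta}_{t-1}(i) = \sum_{n=1}^{t-1}(\hat{\ell}_n(i)-\hat{\ell}_n(i^*))$.

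Next, for each $t\geq t_{min}(i)$, I split the expectation by conditioning on the event $\{\tilde{\Delta}_{t-1}(i)\geq (t-1)\Delta(i)/2\}$ and its complement. On the complement (the ``bad'' event), I use $p_t(i)\leq 1$ and absorb a factor bounded via Lemma \ref{lemmma:gapAdv} by $(\log t/(t c_{min}^2\Delta(i)^2))^{\alpha-2} + 1/t$. On the ``good'' event, I obtain the deterministic bound
\[
p_t(i) \;\le\; \exp\!\bigl(-\gamma_t\,(t-1)\Delta(i)/2\bigr) \;=\; \exp\!\Bigl(-\tfrac{c_{min}\Delta(i)}{4}\sqrt{(t-1)\log(K)/K}\,\Bigr),
\]
using the definition $\gamma_t = 0.5 c_{min}\sqrt{\log(K)/(tK)}$.

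Finally I sum both contributions over $t\in(t_{min}(i),T]$. The bad-event piece sums to at most $O\!\bigl(\log^2(T)/(c_{min}^2\Delta(i)^2)\bigr)$ when $\alpha=3$, using $\sum_{t}\log(t)/t = O(\log^2(T))$ and $\sum 1/t = O(\log T)$. The good-event piece, being a sum of the form $\sum_t \exp(-C\sqrt{t})$ with $C = \Theta(c_{min}\Delta(i)\sqrt{\log(K)/K})$, is bounded by $O(1/C^2) = O(K/(c_{min}^2\Delta(i)^2\log K))$, which is dominated by the bad-event piece and can be absorbed into the constant $m_2$.

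The main obstacle will be the bookkeeping around the time-varying $\gamma_t$ when integrating $\exp(-C\sqrt{t})$ and verifying that the condition $\gamma \ge c_{min}^2\sqrt{K\log(K)/(B(1+(e-2)/c_{min}^2))}$ is strong enough to make the exponential piece uniformly summable, independent of $T$ (so that it folds into the constant $m_2$ rather than contributing an extra factor of $T$ or $\log T$). Once that is verified, combining the two blocks yields the stated bound $t_{min}(i) + m_2\log^2(T)/(c_{min}^2\Delta(i)^2)$.
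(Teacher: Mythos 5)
Your proposal is correct and takes essentially the same route as the paper's proof: bound $p_{t}(i)\leq \exp(-\gamma_{t}\tilde{\Delta}_{t}(i))$ by dropping all but the $i^{*}$ term in the denominator, pay $t_{min}(i)$ trivially on the initial segment, split the expectation for $t\geq t_{min}(i)$ along the event of Lemma \ref{lemmma:gapAdv} (bad event contributing $(\log t/(tc_{min}^{2}\Delta(i)^{2}))^{\alpha-2}+1/t$, good event contributing $\exp(-\Theta(c_{min}\Delta(i)\sqrt{t\log(K)/K}))$), and sum both pieces by integration to obtain $t_{min}(i)+O(\log^{2}(T)/(c_{min}^{2}\Delta(i)^{2}))$. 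The ``main obstacle'' you flag dissolves: the hypothesis on $\gamma$ is never actually invoked in the paper's argument either, since the exponential piece $\sum_{t}\exp(-C\sqrt{t})=O(1/C^{2})$ is uniformly summable for any fixed $C>0$ and is absorbed into $m_{2}$ exactly as you suggest.
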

\begin{proof}
 We have
\begin{equation}
\begin{split}
    &\sum_{t=1}^{T}\mathbf{E}[p_{t}(i)]\\
    &\leq \sum_{t=1}^{T}\mathbf{E}\Big[\exp(-\gamma_{t}\tilde{\Delta}_{t}(i))\Big]  ,\\
    &\stackrel{(a)}{\leq} t_{min}(i)+\hspace{-3pt}\sum_{t=t_{min}(i)}^{T}\hspace{-3pt}\Bigg[\hspace{-2pt}e^{-\sqrt{\frac{\log(K)}{tK}}\frac{ t{\Delta}(i)}{4K}}+\hspace{-3pt}\frac{1}{t}\\
    &\qquad+\hspace{-3pt}\Bigg(\frac{\log(t)}{tc_{min}^2\Delta(i)^2}\Bigg)^{\alpha-2}+\Bigg(\frac{\log t}{tc_{min}^2\Delta(i)^2}\Bigg)^{\alpha - 2}\\
    &\qquad+\frac{2}{Kt^{\alpha -1}}+2\Bigg(\frac{1}{t}\Bigg)^{\frac{\beta c_{min}^2}{8}}\Bigg],\\
    &\stackrel{(b)}{\leq}t_{min}(i)+O\Bigg(\frac{\log^2(T)}{c_{min}^2\Delta(i)^2}\Bigg),\\
\end{split}
\end{equation}
where $(a)$ follows from the Lemma \ref{lemma:gapLowerBound} , and $(b)$ follows from bounds over the summation of sequences via integration. 
\end{proof}
\subsection{Proof of Theorem 4}
\begin{proof}
For all $i\in[K]$, 
\begin{equation}
 p_{t}(i)=\frac{\exp(-\gamma_{t}\sum_{n=1}^{t-1}\hat{\ell}_{n}(i))}{\sum_{i\in[K]}\exp(-\gamma_{t}\sum_{n=1}^{t-1}\hat{\ell}_{n}(i))},
\end{equation}
and $\gamma_{t}=0.5\sqrt{c_{min}^2\log(K)/Kt}$. Therefore, using Lemma 7 of \cite{seldin2014one}, we have
\begin{equation}
\begin{split}
&\sum_{t=1}^{T}\sum_{i\in [K]} p_{t}(i) \hat{\ell}_{t}(i) -\min_{j\in[K]}\sum_{t=1}^{T}\hat{\ell}_{t}(j)\\
&\leq \frac{1}{2}\sum_{t=1}^{T}\gamma_{t}\sum_{i\in [K]}  p_{t}(i) (\hat{\ell}_{t}(i))^2 +\frac{\log(K)}{\gamma_{T}},
\end{split}
\end{equation}
where $T=\max \{T(i^*),\tau(E)\}$.
We have%Similar to the derivation of (\ref{eq:adv2})
%\begin{equation}\label{eq:adv3}
%    \begin{split}
%      &\mathbf{E}\Bigg[\sum_{t=1}^{\tau(E)}\mathbf{E}[\sum_{i\in [K]} p_{t}(i) \hat{\ell}_{t}(i)|\mathcal{F}_{t-1}]\Bigg]-
%      \mathbf{E}\Bigg[\sum_{t=1}^{T(i^{*})}\hat{\ell}_{t}(i)\Bigg]\\
%      &\leq   \frac{\log(K)}{\gamma_{T}}+\mathbf{E}\Bigg[\sum_{t=1}^{\tau(E)+K/c_{min}}\frac{\gamma_{t}}{2}\mathbf{E}\bigg[\sum_{i\in [K]} p_{t}(i)\hat{\ell}^2_{t}(i)|\mathcal{F}_{t-1}\bigg]\Bigg],
%\end{split}
%\end{equation}
\begin{equation}\label{eq:adv3}
    \begin{split}
      &\mathbf{E}\Bigg[\sum_{t=1}^{T}\mathbf{E}[\sum_{i\in [K]} p_{t}(i) \hat{\ell}_{t}(i)|\mathcal{F}_{t-1}]\Bigg]-
      \mathbf{E}\Bigg[\sum_{t=1}^{T}\hat{\ell}_{t}(i)\Bigg]\\
      &\leq   \frac{\log(K)}{\gamma_{T}}+\mathbf{E}\Bigg[\sum_{t=1}^{T}\frac{\gamma_{t}}{2}\mathbf{E}\bigg[\sum_{i\in [K]} p_{t}(i)\hat{\ell}^2_{t}(i)|\mathcal{F}_{t-1}\bigg]\Bigg],
\end{split}
\end{equation}
where $\mathcal{F}_{t}$ is the sigma field with respect to the  entire past until round $t$.

Now, let us bound the terms in (\ref{eq:adv3}). We have
\begin{equation}\label{eq:adv4}
\begin{split}
&\mathbf{E}[\sum_{i\in [K]} p_{t}(i) \hat{\ell}_{t}(i)|\mathcal{F}_{t-1}]\\
&\geq \mathbf{E}\Bigg[\sum_{i\in[K]}(\tilde{p}_{t}(i)-\epsilon_{t}(i))\hat{\ell}_{t}(i)|\mathcal{F}_{t-1}\Bigg],\\
&\geq  \frac{1}{c_{min}}-\mathbf{E}\Bigg[\frac{r_{t}(i_t)}{c_{t}(i_t)}\Bigg|\mathcal{F}_{t-1}\Bigg]-\sum_{i\in[K]}\frac{\epsilon_{t}(i)}{c_{min}}.
\end{split}
\end{equation}
Also,
\begin{equation}\label{eq:adv44}
     \mathbf{E}\Bigg[\sum_{t=1}^{T}\hat{\ell}_{t}(i^*)\Bigg]=\sum_{t=1}^{T}\frac{1}{c_{min}}-\sum_{t=1}^{T}\frac{r_{t}(j)}{c_{t}(j)}.
\end{equation}
Additionally,
\begin{equation}\label{eq:adv5}
\begin{split}
&\mathbf{E}\Bigg[\sum_{i\in[K]}p_{t}(i)\hat{\ell}^2_{t}(i)|\mathcal{F}_{t-1}\Bigg]\\
&\leq \mathbf{E}\Bigg[\sum_{i\in[K]}\frac{p_{t}}{c_{min}^2\tilde{p}^2_{t}}\Bigg|\mathcal{F}_{t-1}\Bigg],\\
&\leq \sum_{i\in[K]}\frac{p_{t}}{c_{min}^2\tilde{p}_{t}},\\
&\stackrel{(a)}{\leq}\frac{2K}{c_{min}^2},
\end{split}
\end{equation}
where last inequality follows from the definition of $\tilde{p}_{t}(i)$, and the fact that for all $i\in [K]$ and $t$,$(1-\sum_{j\neq i}\epsilon_{t}(j))\geq 0.5$.

Using (\ref{eq:adv2}),(\ref{eq:adv3}), (\ref{eq:adv4}),(\ref{eq:adv44}) and (\ref{eq:adv5}), we have that the expected regret of the algorithm is at most
\begin{equation}
\begin{split}
        &\frac{\log(K)}{\gamma_{n^\prime}}
        +\frac{K}{c_{min}^2} \sum_{t=1}^{n^\prime}\gamma_{t}+\sum_{t=1}^{\tau(E)}\sum_{i\in[K]}\frac{\epsilon_{t}(i)}{c_{min}}\\
       &\stackrel{(a)}{\leq} \frac{\log(K)}{\gamma_{n^\prime}}
        +\frac{K}{c_{min}^2} \sum_{t=1}^{n^\prime}\gamma_{t}+\sum_{t=1}^{n^\prime}\sum_{i\in[K]}\frac{\gamma_{t}}{c^2_{min}}\\
        &\stackrel{(b)}{\leq}6\sqrt{\frac{BK\log(K)}{c_{min}^3}},
\end{split}
\end{equation}
where $n^\prime=\tau(E)+K/c_{min}$,  $(a)$ follows from the value of $\gamma$, and from the fact that $\epsilon_{t}(i)\leq 0.5 c_{min}\sqrt{\log(K)/tK}$,and $(b)$ follows from the concavity of $\sqrt{x}$.
\end{proof}

\bibliographystyle{apalike}
\bibliography{knapsackBanditsWithCorruption}  

\end{document}